\DeclareMathOperator*{\argmax}{arg\,max}
\title{\bf Anchor-Oriented Localized Voronoi Partitioning for GPS-denied Multi-Robot Coverage}
\author{Aiman Munir \and Ehsan Latif \and  Ramviyas Parasuraman
\thanks{The authors are with the School of Computing, University of Georgia, Athens, GA 30602, USA. Author Emails: {\tt\small \{aiman.munir,ehsan.latif,ramviyas\}@uga.edu}  
}}
\begin{document}

\newtheorem{definition}{Definition}
\newtheorem{theorem}{Theorem}
\newtheorem{lemma}{Lemma}
\newtheorem{proposition}{Proposition}
\newtheorem{property}{Property}
\newtheorem{observation}{Observation}
\newtheorem{corollary}{Corollary}

\maketitle
\thispagestyle{empty}
\pagestyle{empty}

\begin{abstract}
Multi-robot coverage is crucial in numerous applications, including environmental monitoring, search and rescue operations, and precision agriculture. In modern applications, a multi-robot team must collaboratively explore unknown spatial fields in GPS-denied and extreme environments where global localization is unavailable. Coverage algorithms typically assume that the robot positions and the coverage environment are defined in a global reference frame. However, coordinating robot motion and ensuring coverage of the shared convex workspace without global localization is challenging. This paper proposes a novel anchor-oriented coverage (AOC) approach to generate dynamic localized Voronoi partitions based around a common anchor position. We further propose a consensus-based coordination algorithm that achieves agreement on the coverage workspace around the anchor in the robots' relative frames of reference. 
Through extensive simulations and real-world experiments, we demonstrate that the proposed anchor-oriented approach using localized Voronoi partitioning performs as well as the state-of-the-art coverage controller using GPS. 
\end{abstract}

\section{Introduction}
\label{sec:intro}


Multi-robot systems (MRS) have recently attracted a lot of interest for various use cases involving significant cooperation among robots, such as logistics, surveillance, monitoring, target tracking, and search and rescue. 
Out of these potential use cases, multi-robot coverage is a well-studied problem, where the goal is to ensure robots' optimal deployment in an environment for persistently gathering important information \cite{chen2020distributed} or exploration \cite{latif2023seal}. 
To obtain optimal environment coverage with networked mobile robots, 
Voronoi partitioning 
offers a straightforward answer that ensures the networked robots will eventually converge to the configuration that maximizes environment coverage (i.e., centroidal Voronoi Tessellation \cite{cortes2004coverage}). 

There are numerous ways to create an exact Voronoi diagram, either by communicating with the robots and broadcasting messages or assuming that each robot can determine the positions of all the others \cite{guruprasad2012distributed,gosrich2022coverage}.
However, coverage is challenging for robots with limited resources and constraints, particularly in GPS-denied or extreme environments where global localization is unavailable or cannot be shared due to location-based privacy limitations.

\begin{figure}[t]
\centering
 \includegraphics[width=1\linewidth]{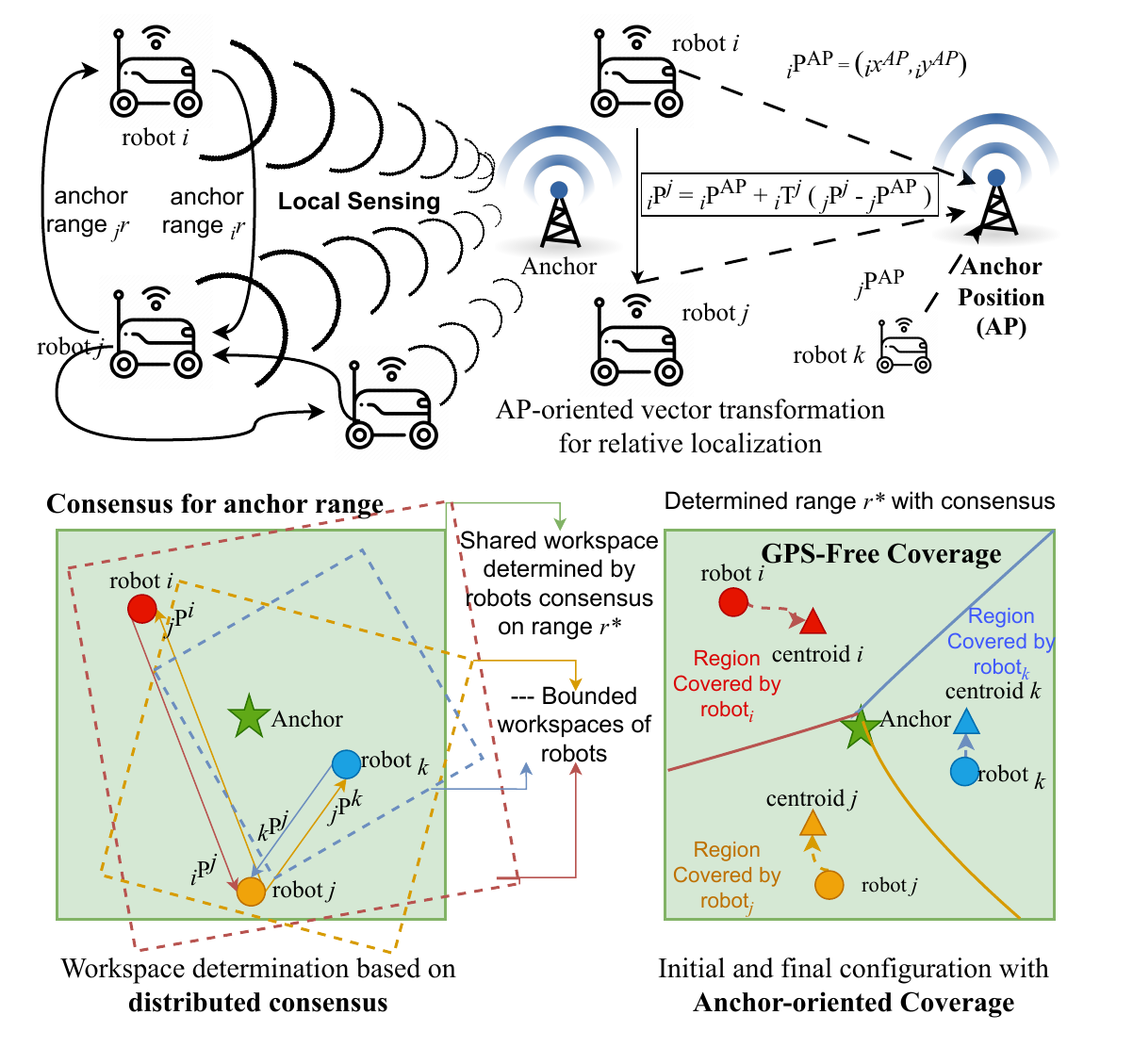}
 \caption{Overview of the proposed anchor-oriented multi-robot coverage, with consensus control on boundary size.}
 \label{fig:overview}
\end{figure}


Few approaches have attempted to overcome the implications of uncertain positioning with a known workspace, such as the distributed methods \cite{he2018distributed} to compute the Voronoi partitioning for a small sensor network with a fixed workspace or limited sensing range or using the Guaranteed Voronoi partitioning to deal with the uncertainty in the sensors \cite{papatheodorou2018distributed}. 
However, these works are based on global positioning; therefore, to compute each robot's Voronoi cell, the coverage method has to know the location of the Voronoi neighbors and the bounds of the convex environment in a global frame. 

We propose a new coverage law that can work without a global reference frame to address these challenges. The robots are equipped with sensors that can locate an anchor with some uncertainty and perform inexpensive computations to determine the relative distance of neighbor robots using the anchor as a reference point.
Yet, the robots need common coverage boundaries to partition their regions. A non-agreed workspace can lead to inconsistent coverage problems.
To address this issue, we propose a consensus-based law to help robots identify the coverage bounds of the convex workspace centered around the anchor point in their local reference frames and compute the necessary Voronoi partitioning in a fully distributed manner. 
Since the only information required by the control rule may be locally gathered by onboard sensors, the proposed coverage approach can be applied to a group of robots placed in an unknown environment. 
The high-level overview of the proposed approach is presented in Fig.~\ref{fig:overview}, depicting a GPS-free coverage with 3 robots.

The key contributions of the proposed approach are 
\begin{itemize}
  \item We propose a novel anchor-oriented coverage technique, which takes advantage of the fact that all robots may perceive the same anchor position in their local reference frames and perform optimal coverage of a workspace centered around the anchor. 
  \item A distributed consensus-based approach is leveraged to determine the workspace boundary in the robot's local frames of references, considering robots can estimate the anchor location with some uncertainty and share information with each other. The consensus achieved on the boundary radius and orientation around the anchor would then be used to construct workspaces in local reference frames, with each robot performing Voronoi partitioning locally, thus disregarding the necessity for a global coordinate system.
  \item Centroid seeking coverage law is derived for the localized Voronoi partitions with the agreed-upon local workspace bounds. We theoretically analyze the method and show that the proposed anchor-oriented Voronoi partitioning will provide a mathematical equivalent to global positioning-based partitioning. We also derive a theoretical upper bound for the cumulative regret of the locational cost achievable by the robots due to the influence of the uncertainty in anchor position estimates.
  \item We validate the proposed approach through extensive simulations and real-world robot demonstrations (see the attached video) and show that our approach, without using global coordinates, can match the performance of a GPS-aided coverage controller. 
  \item We open-source the relevant codes in Github\footnote{\url{https://github.com/herolab-uga/mr-aoc}} for further use and development by the community.
  \end{itemize}

\section{Related Work}
\label{sec:relatedwork}
In the literature, 
the early multi-robot coverage approaches proposed a distributed Lloyd's algorithm for coverage control in mobile sensor networks \cite{lloyd1982least,schwager2006distributed,gusrialdi2008Voronoi}. Using Lloyd's technique, these approaches partition the sensing region into Voronoi cells and allocate them to specific sensors for coverage with global positioning. 
These methods made strong assumptions, such as reliable robotic communication and known sensing capabilities, which made them impractical. To address this imperfect communication, the approaches \cite{miah2014nonuniform,siligardi2019} offer coverage control techniques for MRS with intermittent communication using precise localization and mapping information; however, these approaches cannot be applied to GPS-denied (or challenging indoor) environments where absolute location is unavailable. Also, sharing complete localization and sensing information among networks raises concerns regarding the location privacy of robots.




On the other hand, recent research such as \cite{pratissoli2022coverage, luo2019minimum} attempts to mitigate the effect of imprecise localization for multi-robot coverage. These approaches proposed optimal coverage by employing a distributed coordination technique and maintaining a minimal number of connectivity in multi-robot systems. To account for the uncertainty of the localization into the coverage controllers, research in \cite{papatheodorou2018distributed} used additively-weighted (power diagram) Guaranteed Voronoi partitioning, which assigns an importance weight based on the uncertain region to divide the environment into regions. However, if the weights of the power diagram are imbalanced, certain areas may suffer from under or over-coverage. These approaches assume the same sensing capabilities of robots and the known but uncertain localization information. 
Moreover, most approaches consider the coverage workspace clearly defined (or known in a global frame of reference).

Relative localization \cite{wanasinghe2015relative,faigl2013low,wiktor2020ICRA,latif2022multi} can be a possible solution in GPS-denied and extreme environments and enable the system to operate optimally if employed with correct measures. 
For instance, we proposed a graph-theoretic approach to relative localization in a previous study \cite{latif2022dgorl}. While it addressed some scalability issues of MRS, it still encounters high computational complexity and requires range (or RSSI) sensing between every robot combination, which may not be practical in highly resource-constrained systems \cite{faigl2013low}. Therefore, a system relying on a single environmental anchor like the Access Point of a wireless connection could be more practical. 
There have been a few attempts, such as \cite{pierson2017distributed,chen2022distributedMulti_target,pimenta2010simultaneous}, which consider relative localization for target tracking, but there is no such work exists which attempts to achieve optimal coverage with relative localization.

Our paper remedies these gaps by contributing a new coverage partitioning approach that uses detection and tracking of an anchor, which all robots can see in a team. Then, anchor-oriented partitioning is derived by sharing anchor positions in the robot's local frame of reference.
While performing localized Voronoi partitioning, it has become challenging to determine the boundary of the workspace because of the heterogeneous sensing capabilities of robots \cite{arslan2016voronoi,santos2018coverage,manjanna2018heterogeneous}. 
Therefore, departing from the extant frameworks for coverage control, we propose a new consensus-based approach to arrive at the decision of coverage workspace between multiple robots, considering the uncertainties in the anchor position estimates. 

\section{Multi-Robot Coverage}
\label{sec:coverage}

Let us consider $N$ number of robots defined by a set $R =\{1,2,\ldots,N\}$ are connected via a wireless network, forming a connected graph $\mathcal{G} = (R,E)$ defined by their edges $E = \{(i,j) | i \in \mathcal{N}_j \}$. $\mathcal{N}_j = \{i \in R | (i,j) \in E \}$ is the neighbor set of robot $i$ in the graph $\mathcal{G}$. The cardinality of the set $|\mathcal{N}_i| \in \mathbb{R}^n$ represents the number of neighbors for robot $i$.

Multi-robot coverage refers to strategically distributing robots across a domain to optimally cover the area of interest. 
Here, the set of $R$ robots is tasked with covering a convex environment optimally denoted as $\mathcal{Q} \subset \mathbb{R}^2$. We use ${\mathbf{p}}_i \in \mathcal{Q}$ to represent the position of robot $i$ and $q$ to denote a point within the workspace $Q$. Voronoi partition divides the workspace $Q$ into disjoint region of dominance for each robot $i$ based on its proximity to each point $q \in Q$, such that $\bigcup_{i=1}^{n} \mathcal{V}_i = Q$ and $\bigcap_{i=1}^{n} \mathcal{V}_i = \emptyset$.  This partition $\mathcal{V}_i$ for each robot $i \in R$ can be computed as follows:
\begin{equation}
\mathcal{V}_i=\left\{q \in \mathcal{Q} \mid\left\|q-\mathbf{p}_i\right\| \leq\left\|q-\mathbf{p}_j\right\|, \forall j \neq i, i \in R\right\} .
\label{eq: Voronoi}
\end{equation}
Here, $\| \cdot \|$ represents the Euclidean norm. The Voronoi partition would be referred to as Centroid Voronoi tessellation (CVT) \cite{du1999centroidal} when all the robots are at the centroid of their respective region of dominance. The cost function can determine the quality of coverage \cite{cortes2004coverage}, defined as:
\begin{equation}
H_{\mathcal{V}}(\mathbf{p}_1, ..., \mathbf{p}_n) = \sum_{i=1}^n \int_{\mathcal{V}_i} \frac{1}{2} |q - \mathbf{p}_i|^2 \phi(q) dq ,
\label{tarditionalLLoydCost}
\end{equation}
where $\phi(q)$ is a density function $\mathcal{Q} \to R>0$ to describe the importance of a given point q. 

Taking the gradient of the locational cost function yields the following control law that drives the robots to follow the direction of the negative gradient and moves them towards their respective centroid $C_{\mathcal{V}_i}$ (Lloyd's algorithm),
\begin{equation}
\dot{\mathbf{p}_i} = - k\left(C_{\mathcal{V}_i}-\mathbf{p}_i\right)
\label{eq: position controller}
\end{equation}
\begin{equation}
C_{\mathcal{V}_i} = \frac{1}{M_{\mathcal{V}_i}} \int_{\mathcal{V}_i} q \phi(q) dq  \text{ and } M_{\mathcal{V}_i}  = \int_{\mathcal{V}_i}  \phi(q) dq ,
\label{eq:centroid}
\end{equation}
where $k$ is a positive control gain.
Applying Lloyd's controller will yield an optimal distribution (partition) of the workspace as
$H_{\mathcal{V}}(\mathbf{p}_1^*, ..., \mathbf{p}_n^*)$, where $p_i^* = C_{\mathcal{V}_i}$.

Note that calculating the centroid of the Voronoi region in Eq.~\eqref{eq:centroid} needs two pieces of information: $\mathcal{V}_i \in \mathcal{Q}$ and $p_i \in \mathcal{Q}$, both of which are defined only in the global FoR (i.e., expressed in a world frame or GPS coordinate system).
\section{Anchor-oriented Coverage Approach}
\label{sec:method}

First, we formulate the GPS-denied coverage problem and then describe the proposed distributed controller for multi-robot coverage along with its theoretical properties.

\subsection{Localized Voronoi Partitioning Problem}
\label{sec:localizedvoronoi}
Let ${}^{i}{\mathbf{p}}_i$ represent the robot $i$'s position, which is available only in local ($i$'s internal) FoR. Here, the variable's prescript ${}^i$ denotes the FoR, and the subscript ${}^i$ denotes the concerning robot.
In a GPS-denied environment (or in situations where robots cannot share their information in a common coordinate system), there is no global FoR. Therefore, each robot has to perform the Voronoi partitioning and coverage in their internal FoRs, i.e., each robot operates with its workspace ${}^i \mathcal{Q} \in \mathbb{R}^2$. 
The problem here is to achieve optimal coverage (minimum cost in Eq.~\eqref{tarditionalLLoydCost}) in a distributed fashion, where each robot $i$ needs to perform the localized Voronoi partitioning of the multi-robot workspace 
without access to a global FoR.
To the best of the authors' knowledge, this unexplored problem has substantial practical implications in realizing truly distributed multi-robot deployments in applications like search and rescue or in extreme environments (e.g., extra-terrestrial) where infrastructure could be scarce. 


\begin{figure}[t]
\centering
 \includegraphics[width=0.98\linewidth,trim={4mm 4mm 4mm 4mm},clip]{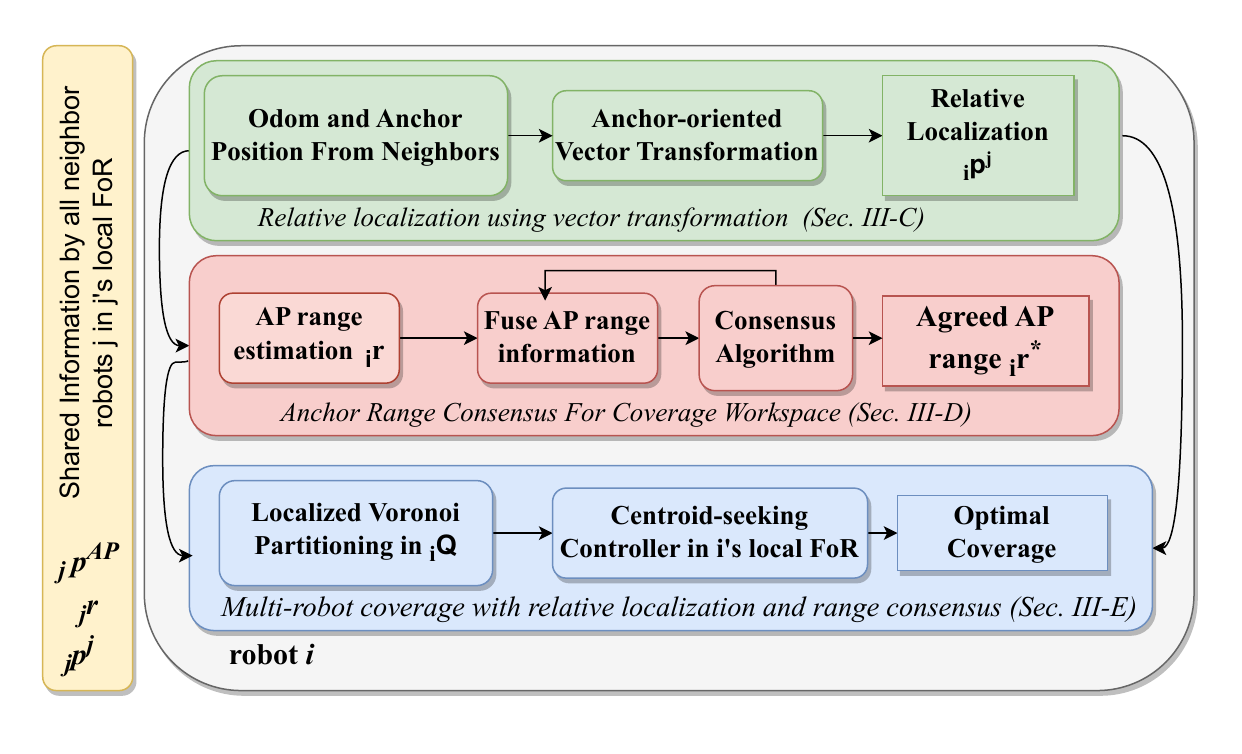}
 \vspace{-4mm}
 \caption{The distributed system architecture of multi-robot coverage for robot $i$ using anchor-oriented boundary consensus and partitioning to solve the GPS-denied coverage problem.}
 \label{architecture}
 \vspace{-4mm}
\end{figure}

We assume that all robots can locate an anchor (AP) \footnote{It is possible to extend this work to multiple anchor settings for large environments, as long as two neighboring robots can detect at least one common anchor in their vicinity.} and estimate the anchor's position with some uncertainty as a Gaussian distribution ${N}({}^i \mu_{AP}, {}^i \sigma^2_{AP})$.  
The anchor may refer to any one of the following: a known, unique feature in the environment, an Access Point of the wireless network, a common target for all robots, a source of an environmental function being measured (e.g., heat or radiation source), a powerful robot/satellite/drone which all other robots can see, etc. 
A Bayesian inferencing and filtering technique is assumed to be used to locate or track a landmark/object (e.g., locating Wi-Fi Access Point using Gaussian Process Regression approach \cite{latif2023gprl}).
Each robot is also equipped with an inexpensive, ubiquitously available Inertial Measurement Unit (IMU) for local odometry.
We address the above problem by proposing a new anchor-oriented coverage controller. In this controller, the anchor is used as the reference point for bounding the coverage workspace and performing Voronoi partitioning.
Fig.~\ref{architecture} delineates the distributed system architecture and summarizes the proposed approach. 
Note each robot locates the anchor using its onboard (local) sensor.

\subsection{Anchor Boundary Consensus}
\label{sec:anchor_boundary_consensus}
As noted in Eq.~\eqref{eq:centroid}, the workspace boundaries where the robots are to cover are unavailable in a global frame. Therefore, the robots have to agree on a common workspace boundary that they can apply in their local frames.
To address this problem, we use the commonly detected anchor as the center of a workspace (instead of global coordinate reference bounds). For this, robots need to determine and agree on the size of a squared boundary (width and orientation) centered around the anchor position. Each robot's optimal sensor/visibility range and its current orientation with respect to the anchor are used to determine the commonly agreed-upon boundary around the anchor. 
We refer to this region as a local environment for the robot, i.e., ${}^i\mathcal{Q} \subset \mathbb{R}^2$. 
Precisely, to estimate this local environment, a square boundary with the inscribed circle is used, with the circle radius being $ r^*$ and centered around the anchor ${}^i{}\mathbf{p}_{AP}$. i.e.,
\begin{eqnarray}
\begin{aligned}
 {}^i\mathcal{Q} & = \{ {}^i q = ({}^i x_q,{}^i y_q)  \mid \,  \left\|({}^i x_q - {}^i x_{AP}) + ({}^i y_q-{}^i y_{AP}) \right\| \\ 
 & + \left\|({}^i x_q - {}^i x_{AP}) - ({}^i y_q - {}^i y_{AP}) \right\| 
 \leq 2  r^* \, \},
\end{aligned}
\label{eq:square}
\end{eqnarray}


We will use a distributed consensus algorithm to determine this boundary. 
Once a robot $i$ advertises the tuple ${}^i b = ({}^i r, {}^i \theta)$ that includes the radius and orientation it wants to have, it broadcasts this to its local neighbors, and each robot then runs a distributed consensus algorithm such as \cite{parasuraman2019consensus} to determine the agreeable radius $r^*$ and orientation $\theta^*$ of the localized workspace centered around the anchor.

The consensus-based update rule for each robot $i$ is 
\begin{eqnarray}
{}^i b (k+1) = w_i(k)  {}^i b(k) + \alpha \sum_{j \in \mathcal{N}_i} w_j(k)  ({}^j b(k) - {}^i b(k)), 
\label{eq:consensus}
\end{eqnarray}
where ${}^i r(k)$ is the advertised radius by robot $i$ at time $k$, $w_{i}(k)$ is a weight coefficient that represents the robot i's uncertainty in estimating the anchor's location at instant $k$ (i.e., $w_{i}(k) \propto \frac{1}{{}^i \sigma^2_{AP}}$), and $\alpha$ is a constant update step size satisfying $0 < \alpha \leq 1$.
The motivation to achieve consensus on the width of the boundary is that the robots should rely more on the optimal sensing radius of the robot with the lowest uncertainty. For instance, assuming all robots carry different types of sensors to obtain the anchor location, the robot detecting the anchor with low uncertainty will influence its advertised radius to be used by the whole group.
Similarly, the motivation for the orientation consensus is to minimize the changes in turns each robot makes to achieve the coverage. Here, each robot will advertise its current orientation, and it will reach a consensus on an angle that provides the least disruption for the whole group. 

This distributed update with local neighbors is then propagated to the neighbor of neighbors until all the robots $i \in R$ of the multi-robot team in $\mathcal{G}$ reaches the same value $ b^* = {}^i b(k+\tau)$, where $\tau$ is the number of iterations (communication rounds) to achieve convergence.

From graph theory applied to multi-robot controllers \cite{zavlanos2011graph}, we know for a fact that the value ${}^i r \,\forall i \in R$ would eventually converge to a common value if $\mathcal{G}$ is connected. i.e., 
\begin{equation}
\lim_{t \rightarrow \infty} {}^i b(t) = b^*  \, \text{and} \, \lim_{t\rightarrow \infty} ({}^i b - {}^j b ) = 0, \forall i, j \in R
\label{eqn:concensus_range}
\end{equation}
where $b^* = (r^*,\theta^*)$ is the consensus over the anchor's radius and orientation among all robots. Moreover, this can be proven to happen with iterations at most the diameter of the graph, i.e., $\tau \leq \mathbb{D}(\mathcal{G})$ \cite{alonso2016distributed}.

\subsection{Anchor-oriented Coverage (AOC) Controller}
\label{sec:covergae_controller}
Finally, the anchor-oriented Voronoi partitioning by each robot in their local environments ${}^i \mathcal{Q}$ can be calculated as
\begin{align}
{}^i \mathcal{V}_i  \left({}^i{\mathbf{P}}\right)  = \{{}^iq \in {}^{i}\mathcal{Q} \mid \left\|({}^iq-{}^i{\mathbf{p}_{AP}}) - ({}^i\mathbf{p}_i-{}^i{\mathbf{p}_{AP}})\right\| \nonumber \\
\leq \left\| ({}^iq-{}^i{\mathbf{p}_{AP}}) - {}^i \mathbf{R}_j ({}^j\mathbf{p}_j-{}^j{\mathbf{p}_{AP}}) \right\| , \forall j \in \mathcal{N}_i \}
\label{eq: anchor_oriented_voronoi}
\end{align}

Here, ${}^i \mathbf{P} = \{ _i \mathbf{p}_1, _i \mathbf{p}_2, \ldots, _i \mathbf{p}_{n}\}$ is the vector containing the relative positions of all robots w.r.t. $i's$ FoR, and ${}^i \mathbf{R}_j$ is a rotation matrix to transform the coordinates from j to i. 
We posit that robots know their initial orientations (to obtain $_{i}\mathbf{R}^j$), assuming that robots start from a command station, where initial configuration can be controlled. They can also obtain this in real-time using their magnetometers in the IMU (magnetic heading acting as a proxy for global orientation). This is useful for aerial robots but not ground robots, given that a magnetic field is sensitive to man-made structures, especially in urban environments. Nevertheless, this limitation can be alleviated by the use of Angle-of-Arrival estimation techniques \cite{jadhav2022toolbox} or RSSI-based relative bearing estimation \cite{parasuraman2019consensus} for the relative bearing of neighboring robots. Precise dependency of this rotation is out of the scope of this paper.

We can now derive Lloyd's centroid-seeking controller equivalent for this anchor-oriented Voronoi partitioning as
\begin{equation}
{}^i \dot{\mathbf{p}}_i = - k\left(C_{{}^i \mathcal{V}_i }-{}^i{\mathbf{p}}_i\right)
\label{eq: positionController_relativeLocalization} 
\end{equation}
\begin{equation}
C_{{}^i \mathcal{V}_i} = \frac{1}{M_{{}^i \mathcal{V}_i}} \int_{{}^i \mathcal{V}_i} {}^i q \phi({}^i q) dq  \text{ and } M_{{}^i \mathcal{V}_i}  = \int_{{}^i \mathcal{V}_i}  \phi({}^i q) dq .
\label{eq: positionController_relativeLocalization_centroid}
\end{equation}
It is important to emphasize that within this context, each robot $i$ possesses its estimate of the Voronoi partition within its local FoR, and both ${}^i \mathcal{V}_i$ and ${}^i \mathbf{P}$ are defined in local FoR, ensuring independence from a global FoR.
Consequently, each robot computes its velocity by relying on its localized estimation of the centroid.
In this way, the proposed method enables efficient coverage in GPS-denied environments.
Algorithm~\ref{algo:aoc} provides a pseudocode representation of this methodology.

\begin{algorithm}[t]
\KwIn{
set of robots $R$, their positions in local FoR, and workspace boundary (b).\\
 }
 \KwOut{
    Coverage Partition ${}^{i}\mathcal{Q}$ of Robot $i$ in FoR($i$).
 }
Estimate workspace boundary ${}^i b$, $\forall i \in R$ \\
\SetKwBlock{RConsensus}{Consensus For Anchor Boundary}{end}
\RConsensus
{
\For{neighbor robot $j \in \mathcal{N}_i$}
{
    Send ${}^i b (t)$  \\
    Receive ${}^i b (t)$  \\
}
Apply Eq.~\eqref{eq:consensus} until boundary consensus
}


 \While{coverage not converged}{
  \For{Each robot $i \in R$} 
    {
    Find anchor position ${}^i\mathbf{p}_{AP}$ using local sensor. \\
    Get info from neighbors, ${}^j\mathbf{p}_{j}$ and ${}^j\mathbf{p}_{AP}$. \\
    Get Voronoi partition for all robots using Eq.~\eqref{eq: anchor_oriented_voronoi}. \\
    Apply position controller ${}^i\dot{p}_i$ in Eq \eqref{eq: positionController_relativeLocalization}.\\
    }
    \If{$C_{{}^i \mathcal{V}_i}- {}^i\mathbf{p}_i \leq \epsilon, \forall I \in R$}{
      convergence = True;\\
      Use the coverage partitions for successive multi-robot tasks (e.g., adaptive sampling, cooperative tracking, exploration, etc.).
   }
 }
\caption{Anchor-Oriented Multi-Robot Coverage Partitioning without Global Localization}
\label{algo:aoc}
\end{algorithm}

\subsection{Theoretical Analysis}
Here, we theoretically analyze the correctness of the proposed approach in obtaining optimal partitions.

\def\figwidth{0.41}
\begin{figure*}[ht]
    \centering
    \small
    \renewcommand{\arraystretch}{0.2}
    \begin{tabular}{ccccc}
    \hspace{-0.18in}
        \textbf{\underline{Initial}} & 
        \hspace{-0.10in}\textbf{\underline{CVT (GPS-based)}} & \hspace{-0.10in}\textbf{\underline{AOC (no-consensus)}} & \hspace{-0.18in}
        \textbf{\underline{AOC (ours)}} &\hspace{-0.18in} \textbf{\underline{Regret $r(t)$}}\\
        \hspace{-0.18in}
        \includegraphics[width=\figwidth\columnwidth]{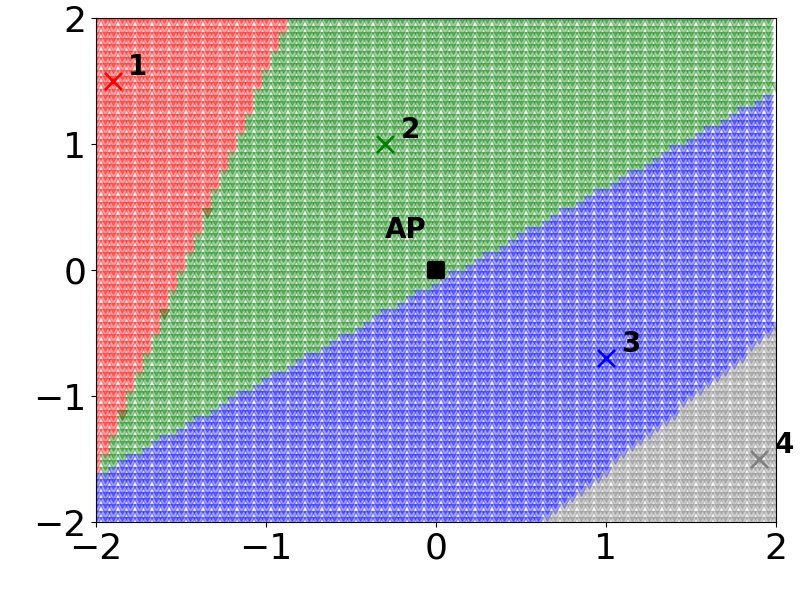} &
        \hspace{-0.18in}\includegraphics[width=\figwidth\columnwidth]{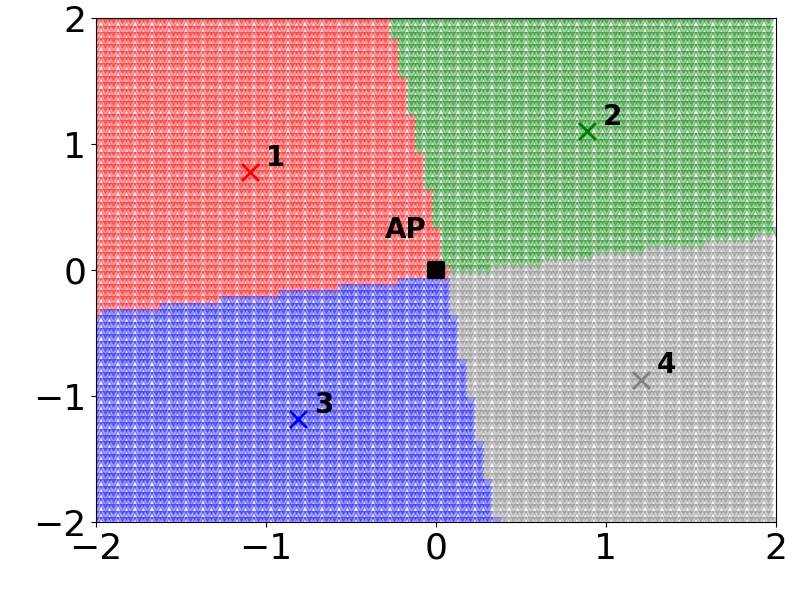}&
        \hspace{-0.18in}\includegraphics[width=\figwidth\columnwidth]{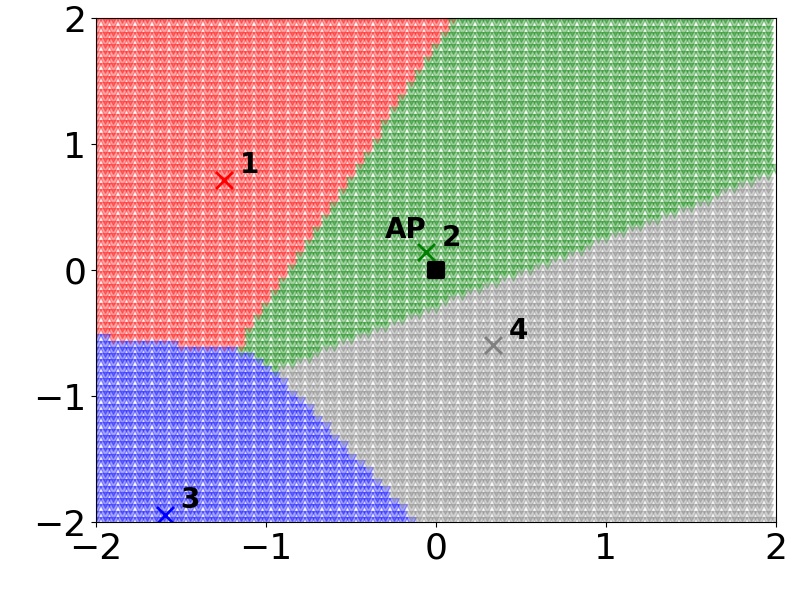} & 
        \hspace{-0.18in}\includegraphics[width=\figwidth\columnwidth]{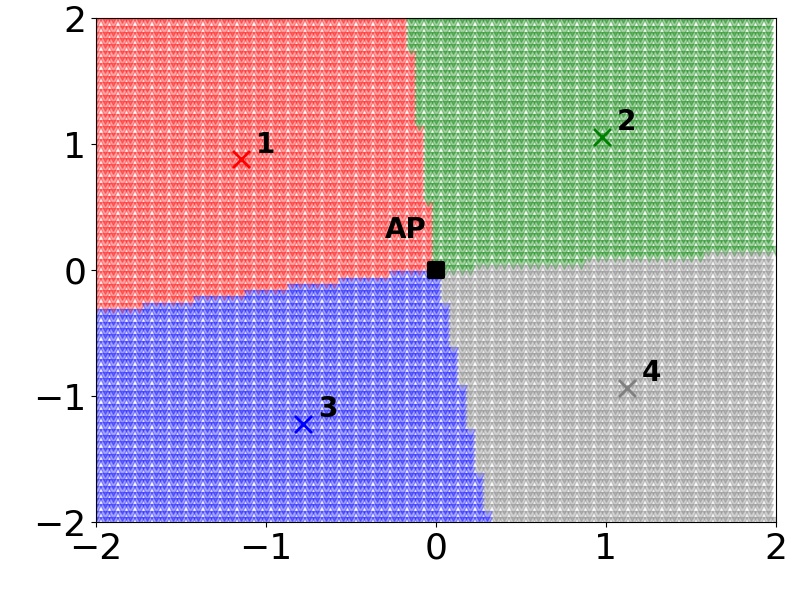} & 
        \hspace{-0.18in}\includegraphics[width=\figwidth\columnwidth]{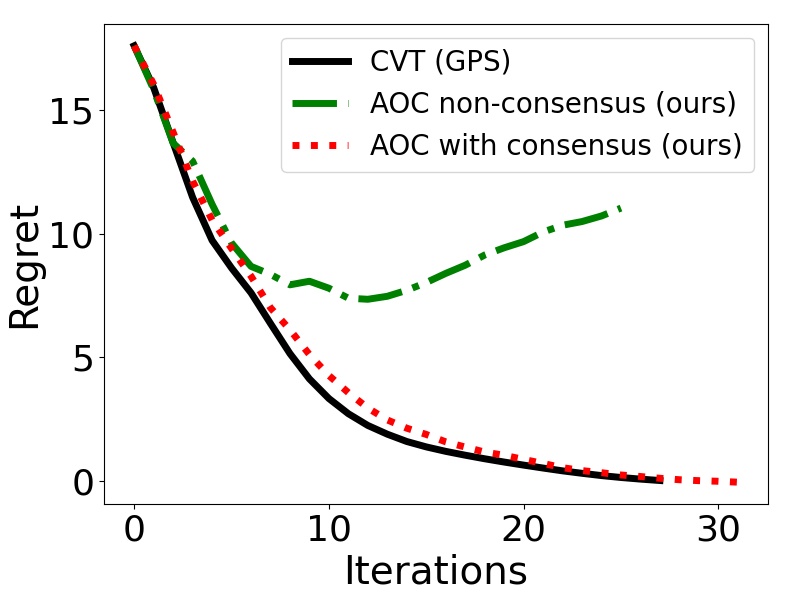}
     
    \end{tabular}
    \caption{Coverage results in a $4\times4 m^2$ environment. The plots show the initial locations of robots and the final locations after running different coverage controllers. The right-most plots show the locational cost from Eq.~\eqref{tarditionalLLoydCost}, which shows the AOC with consensus closely tracks CVT's performance.}
    \label{fig:N4_0.1std}
    
\end{figure*}
\def\figwidth{0.41}
\begin{figure*}[t]
    \centering
    \small\renewcommand{\arraystretch}{0.2}
    \begin{tabular}{ccccc}
    \hspace{-0.18in}
         \textbf{\underline{Initial}} & 
         \hspace{-0.10in}\textbf{\underline{CVT (GPS-based)}} & \hspace{-0.10in}\textbf{\underline{AOC (no-consensus)}} & \hspace{-0.2in}
         \textbf{\underline{AOC (ours)}} &\hspace{-0.2in} \textbf{\underline{Regret $r(t)$}}\\
         \hspace{-0.18in}
        \includegraphics[width=\figwidth\columnwidth]{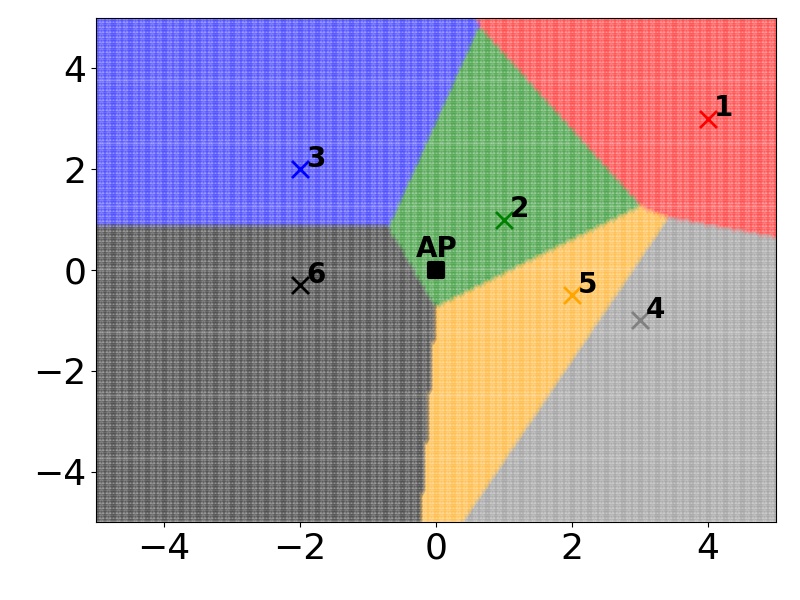} &
        \hspace{-0.18in}\includegraphics[width=\figwidth\columnwidth]{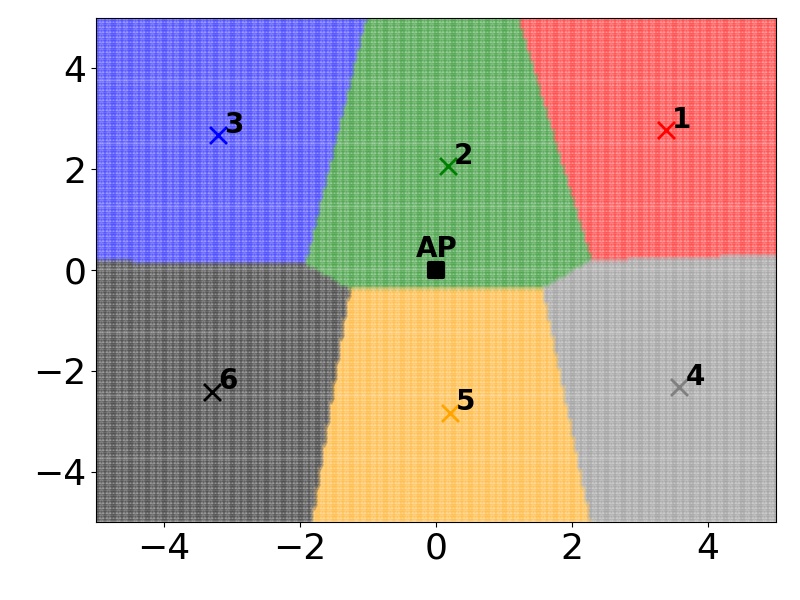}&
        \hspace{-0.18in}\includegraphics[width=\figwidth\columnwidth]{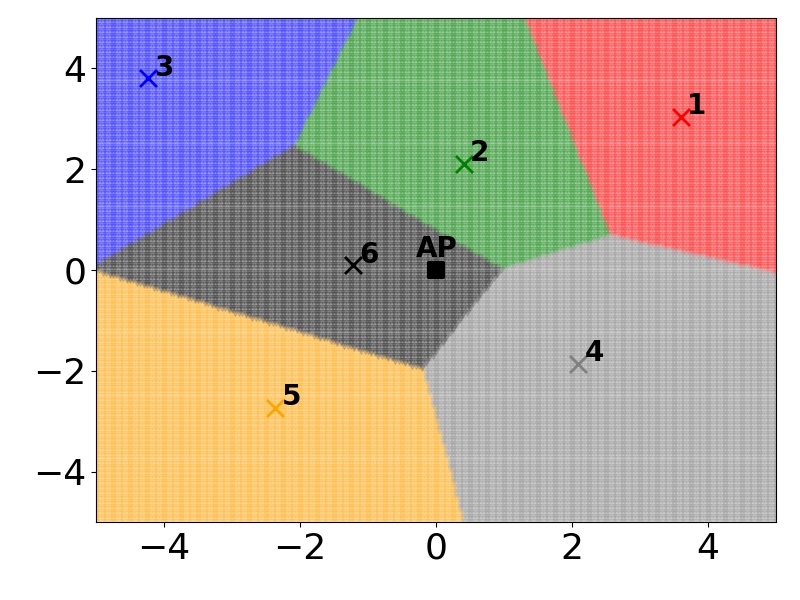} & 
        \hspace{-0.18in}\includegraphics[width=\figwidth\columnwidth]{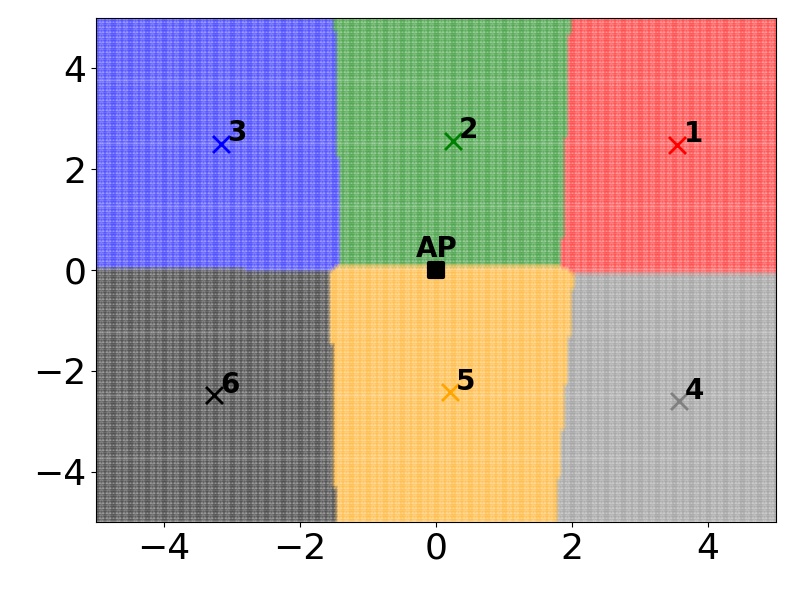} & 
        \hspace{-0.18in}\includegraphics[width=\figwidth\columnwidth]{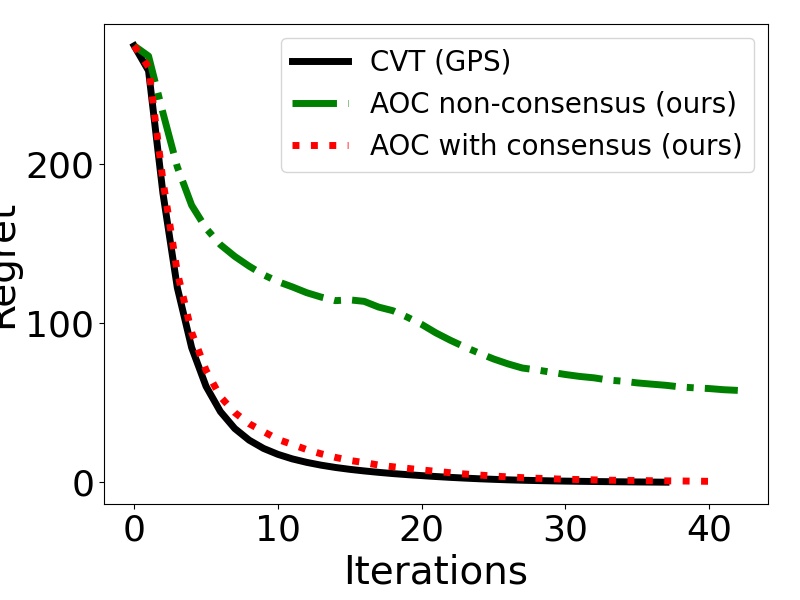}
     
    \end{tabular}
    \caption{Coverage results in a $10\times10 m^2$ environment. The coverage results and the locational cost confirm the proposed approach's capability to solve the Localized Voronoi partitioning problem with equivalent performance to global Voronoi partitioning.}

    \label{fig:N6_0.1std}
\end{figure*}

\def\figwidth{0.41}
\begin{figure*}[t]
    \centering
    \small
    \renewcommand{\arraystretch}{0.2}
    \begin{tabular}{ccccc}
     \hspace{-0.18in}
  \textbf{\underline{Initial  (Global FoR)}} & 
          \hspace{-0.08in}\textbf{\underline{Robot1 FoR}} & \hspace{-0.05in}\textbf{\underline{Robot2 FoR}} & \hspace{-0.05in}
         \textbf{\underline{Robot3 FoR}} &\hspace{-0.05in}\textbf{\underline{Robot4 FoR}}\\
        \hspace{-0.18in}        \includegraphics[width=\figwidth\columnwidth]{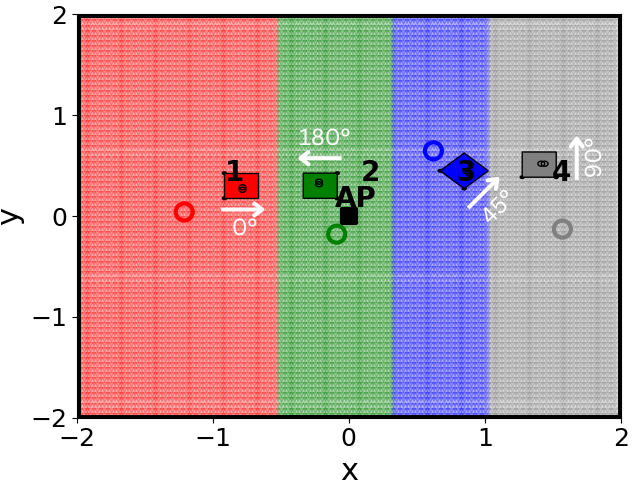} &
        \hspace{-0.18in}\includegraphics[width=\figwidth\columnwidth]{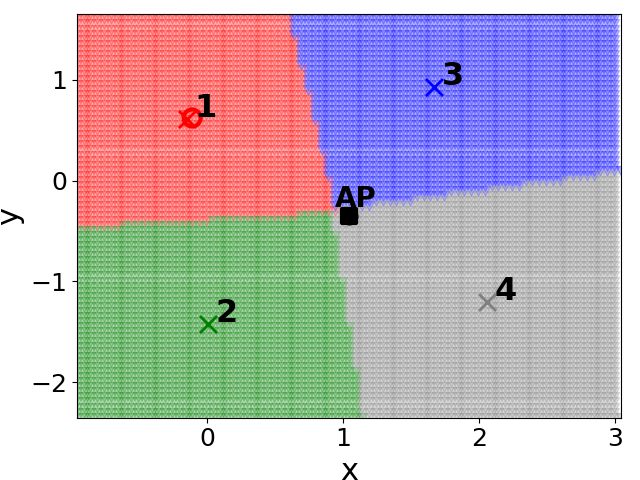}&
        \hspace{-0.18in}\includegraphics[width=\figwidth\columnwidth]{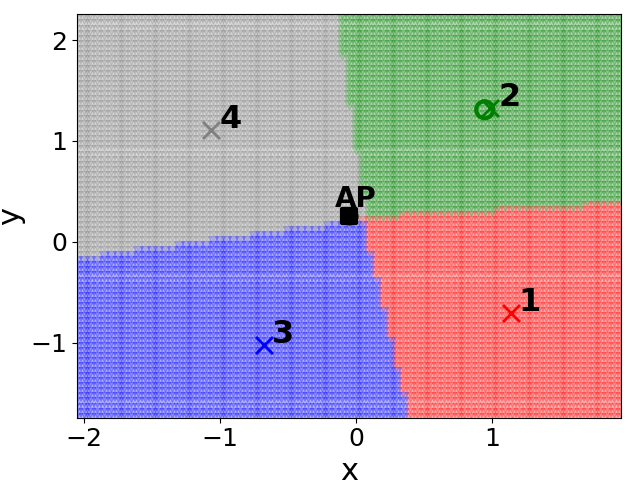} & 
        \hspace{-0.18in}\includegraphics[width=\figwidth\columnwidth]{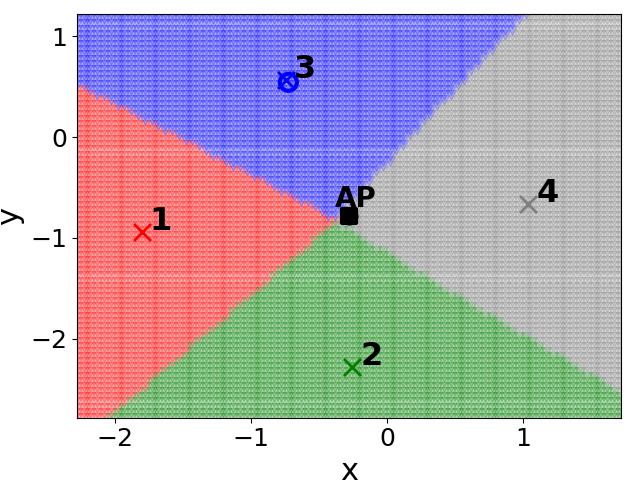} & 
        \hspace{-0.18in}\includegraphics[width=\figwidth\columnwidth]{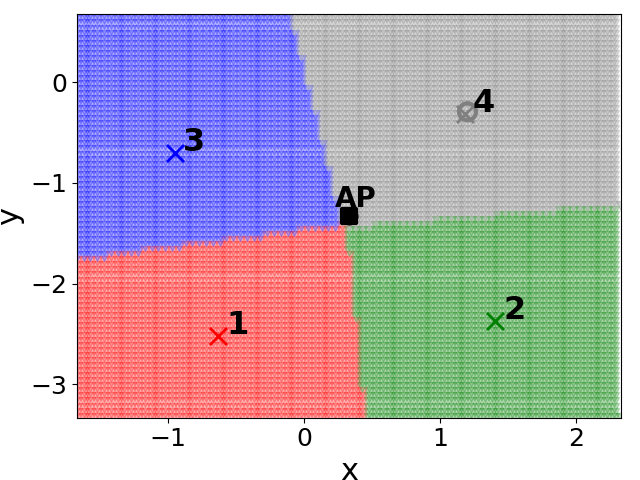}
    \end{tabular}
    \caption{ Coverage results when four robots have different initial orientations 0°,90°,45°, and 180°, respectively. The global FoR aligns with the first robot.}   \label{fig:N4_orientation}
\end{figure*}

\begin{theorem}
Applying anchor range consensus in \eqref{eq:consensus}, the anchor-oriented Voronoi partitioning in \eqref{eq: anchor_oriented_voronoi} with a local bounded region ${}^i \mathcal{Q} \subset \mathbb{R}^2$ will be equivalent to the global Voronoi partitioning in \eqref{eq: Voronoi} with a global bounded region $\mathcal{Q}$.
\label{theorem:partioning}
\end{theorem}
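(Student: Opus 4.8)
The plan is to establish the equivalence by exhibiting an explicit rigid transformation---a rotation about the common anchor---that carries each robot's local frame onto the global frame, and then to show that this map sends both the partition-defining inequalities and the bounded domain of Eq.~\eqref{eq: anchor_oriented_voronoi} onto their global counterparts in Eq.~\eqref{eq: Voronoi}. Two facts drive the argument: the Euclidean norm is invariant under rotation, and because every robot localizes the \emph{same} physical anchor, any displacement measured \emph{from} the anchor is frame-independent up to rotation. Concretely, letting ${}^G\mathbf{R}_i$ denote the rotation from robot $i$'s frame to the global frame, the anchor-referenced displacements satisfy $q - \mathbf{p}_{AP} = {}^G\mathbf{R}_i({}^iq - {}^i\mathbf{p}_{AP})$ and $\mathbf{p}_i - \mathbf{p}_{AP} = {}^G\mathbf{R}_i({}^i\mathbf{p}_i - {}^i\mathbf{p}_{AP})$, together with the cross-frame identity ${}^i\mathbf{p}_j - {}^i\mathbf{p}_{AP} = {}^i\mathbf{R}_j({}^j\mathbf{p}_j - {}^j\mathbf{p}_{AP})$ that already appears in Eq.~\eqref{eq: anchor_oriented_voronoi}.

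First I would simplify the two sides of the defining inequality in Eq.~\eqref{eq: anchor_oriented_voronoi}. The left-hand argument telescopes immediately, $({}^iq - {}^i\mathbf{p}_{AP}) - ({}^i\mathbf{p}_i - {}^i\mathbf{p}_{AP}) = {}^iq - {}^i\mathbf{p}_i$, while substituting the cross-frame identity into the right-hand argument gives $({}^iq - {}^i\mathbf{p}_{AP}) - ({}^i\mathbf{p}_j - {}^i\mathbf{p}_{AP}) = {}^iq - {}^i\mathbf{p}_j$. Hence the anchor-oriented cell collapses to the purely local Voronoi condition $\|{}^iq - {}^i\mathbf{p}_i\| \le \|{}^iq - {}^i\mathbf{p}_j\|$. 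Applying ${}^G\mathbf{R}_i$ to the vectors ${}^iq - {}^i\mathbf{p}_i$ and ${}^iq - {}^i\mathbf{p}_j$ and using the anchor-referenced displacement identities yields ${}^G\mathbf{R}_i({}^iq - {}^i\mathbf{p}_i) = q - \mathbf{p}_i$ and ${}^G\mathbf{R}_i({}^iq - {}^i\mathbf{p}_j) = q - \mathbf{p}_j$; since a rotation preserves length, $\|{}^iq - {}^i\mathbf{p}_i\| = \|q - \mathbf{p}_i\|$ and $\|{}^iq - {}^i\mathbf{p}_j\| = \|q - \mathbf{p}_j\|$. The local inequality is therefore logically equivalent to the global inequality $\|q - \mathbf{p}_i\| \le \|q - \mathbf{p}_j\|$ of Eq.~\eqref{eq: Voronoi}, for every neighbor $j$.

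Next I would handle the domains, which is exactly where the consensus of Eq.~\eqref{eq:consensus} enters. Because the consensus drives all robots to the common pair $b^* = (r^*, \theta^*)$ (Eq.~\eqref{eqn:concensus_range}), the square region ${}^i\mathcal{Q}$ of Eq.~\eqref{eq:square} has the same half-width $r^*$ and the same orientation $\theta^*$ relative to the anchor in every local frame. Consequently the rigid map ${}^G\mathbf{R}_i$, with the anchor as fixed center, sends ${}^i\mathcal{Q}$ onto one and the same physical square centered at the anchor, which we identify with the global domain $\mathcal{Q}$; thus ${}^iq \in {}^i\mathcal{Q}$ if and only if $q \in \mathcal{Q}$. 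Combining the domain equivalence with the inequality equivalence produces a bijection ${}^iq \leftrightarrow q$ under which ${}^i\mathcal{V}_i({}^i\mathbf{P})$ and $\mathcal{V}_i$ coincide, establishing the claimed equivalence.

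I expect the main obstacle to be rigorously justifying the anchor-referenced transformation identities, rather than the algebra, which is routine once those identities are in hand. They rest on the idealizing assumption that all robots estimate the \emph{same} anchor location exactly and share a consistent notion of orientation, so that frames differ by a pure rotation about the anchor; under noisy anchor estimates the map is only approximate, and the resulting deviation is precisely what the companion cumulative-regret bound is meant to quantify, so I would flag that the present statement is the noiseless case. A secondary point worth stating explicitly is the \emph{necessity} of consensus: without agreement on $(r^*,\theta^*)$ the domains ${}^i\mathcal{Q}$ would map to distinct physical regions, the cells would fail to tile a common $\mathcal{Q}$, and the equivalence would break---consistent with the ``AOC (no-consensus)'' baseline underperforming in the reported figures.
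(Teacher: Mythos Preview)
Your proposal is correct and follows essentially the same route as the paper: reduce the anchor-oriented inequality of Eq.~\eqref{eq: anchor_oriented_voronoi} to the local Voronoi condition $\|{}^iq-{}^i\mathbf{p}_i\|\le\|{}^iq-{}^i\mathbf{p}_j\|$ via the cross-frame identity, invoke the consensus $b^*$ to align the domains ${}^i\mathcal{Q}$ with a common $\mathcal{Q}$, and then transport the inequality to the global frame by an isometry. The only difference is cosmetic: the paper posits an abstract transformation $T$ with $T({}^i\mathbf{p}_i)={}^g\mathbf{p}_i$ and applies it (and $T^{-1}$) directly to the norm inequalities, whereas you explicitly identify $T$ as the rotation ${}^G\mathbf{R}_i$ about the anchor and invoke rotational invariance of $\|\cdot\|$, which makes the norm-preservation step that the paper leaves implicit fully rigorous.
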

\begin{proof}
To prove the equivalence of Voronoi partitioning, we first re-arrange the condition in Eq.~\eqref{eq: anchor_oriented_voronoi} as
\begin{equation}
\left\|{}^iq-{}^i{\mathbf{p}_{i}}\right\| 
\leq \left\| ({}^iq-{}^i{\mathbf{p}_{AP}}) - {}^i \mathbf{R}_j ({}^j\mathbf{p}_j-{}^j{\mathbf{p}_{AP}}) \right\|
\end{equation}
After manipulation with ${}^i \mathbf{R}_j$, we see that this is equivalent to a localized Voronoi with relative localization:
\begin{equation}
\|{}^i q_i - {}^i{}\mathbf{p}_i\| \leq \|{}^i q_i - {}^i{}\mathbf{p}_j\|, \quad \forall {}^i q \in {}^i \mathcal{Q} \label{local}
\end{equation}
And the global Voronoi condition, where g is the global FoR (for GPS-based coverage):
\begin{equation}
\|{}^g{}q - {}^g{}\mathbf{p}^i\| \leq \|{}^g{}q - {}^g{}\mathbf{p}^j\|, \quad \forall {}^g{}q \in {}^g \mathcal{Q} \label{global}
\end{equation}
We posit the existence of a consensus on the AP boundary $b^* \forall i \in R$ using the distributed consensus in Eq.~\eqref{eq:consensus}. Since all robots center their partitioning around the same anchor Eq.~\eqref{eq:square}, ${}^i \mathcal{Q} \approx {}^j{}\mathcal{Q} \forall (i,j) \in E(\mathcal{G})$. 
This results in ${}^i{}\mathcal{Q} \approx {}^g{}\mathcal{Q}$, where ${}^g{}\mathcal{Q}$ is the global workspace square region centered at AP.
Hence, based on the given equivalence above:
\begin{equation}
T(\mathbf{{}^i{}\mathbf{p}_i}) = \mathbf{{}^g{}\mathbf{p}_i} \label{transformation}
\end{equation}

By applying transformation $T$ (from Eq.~\eqref{transformation}:
\begin{align*}
\| T(\mathbf{{}^i{}q_i}) - T(\mathbf{{}^i{}\mathbf{p}_i})\| &\leq \| T(\mathbf{{}^i{}q_i}) - T(\mathbf{{}^i{}\mathbf{p}_j})\| \\
\|{}^g{}q - {}^g{}\mathbf{p}_i\| &\leq \|{}^g{}q - {}^g{}\mathbf{p}_j\|
\end{align*}

This is consistent with Eq.~\eqref{global}.
Applying the inverse transformation $T^{-1}$ to Eq.~\eqref{global}:
\begin{align*}
\| T^{-1}(\mathbf{{}^g{}q}) - T^{-1}(\mathbf{{}^g{}\mathbf{p}_i})\| &\leq \| T^{-1}(\mathbf{{}^g{}q}) - T^{-1}(\mathbf{{}^g{}\mathbf{p}_j})\| \\
\|\mathbf{{}^i{}q_i} - \mathbf{{}^i{}\mathbf{p}_i}\| &\leq \|\mathbf{{}^i{}q_i} - \mathbf{{}^i{}\mathbf{p}_j}\| 
\end{align*}
This is consistent with Eq.~\eqref{local}.

Thus, the localized Voronoi condition \eqref{eq: anchor_oriented_voronoi} under the transformation $T$ is equivalent to the global Voronoi condition \eqref{global}, and vice versa under inverse transformation  $T^{-1}$.
\end{proof}

\textit{Corollary 1} The centroid-seeking position controller in Eq.~\eqref{eq: positionController_relativeLocalization} will minimize the coverage cost in Eq.~\eqref{tarditionalLLoydCost}, which for both Global Voronoi partitioning and Localized Voronoi partitioning will be equivalent. i.e.,
\begin{align}
H_{{}^i \mathcal{V}}({}^i \mathbf{P}) &= \sum_{i=1}^n \int_{\mathcal{V}_i} \frac{1}{2} ||{}^i q - {}^i \mathbf{p}_i||^2 \phi({}^i q) dq , \label{eq:localLloydcost}
\\
H_{{}^i \mathcal{V}}({}^i \mathbf{P}) &\approx H_{\mathcal{V}}(\mathbf{p}_1, ..., \mathbf{p}_n) \nonumber
\end{align}
\begin{proof}
Theorem~\ref{theorem:partioning} has proven the equivalence of the local and global partitioning; consequently, the Lloyd controller will converge to the global cost as proved in \cite{cortes2004coverage}.
\end{proof}

\textit{Corollary 2} The distance between the centroids (or final positions) of the Localized Voronoi partitioning of the robots in their local FoRs will be the same as that of other robot's FoRs,  as long as consensus on $r^*$ is achieved. i.e., 
\begin{equation}
 |{}^i c_j - {}^i c_i| = |{}^j c_j - {}^j c_j|  \quad \forall i,j \in R
 \label{eqn:Voronoi}
 \end{equation}
Here, ${}^{i}c_{j}$ refers to the centroid of $j$ in FoR of $i$ in Eq.~\eqref{eq: positionController_relativeLocalization_centroid}.
\begin{proof}
Trivial. Based on the proofs of Theorem~\ref{theorem:partioning} and Corollary 1, it is not difficult for one to see that the robots reach the convergence to their centroids as if a global square region centered around the AP with a width $2r^*$ existed.
\end{proof}

\textit{Remark 1} The anchor can be dynamic (since the robots estimate the anchor's location and share with others at every time instance, $k$, converging quickly to $r^*$). Also, a minimum of one physical anchor can be sufficient to determine the centroids for coverage; however, in the case of multiple dynamic anchors, the moving centroid of centroids of anchor ranges can be considered for coverage (see \cite{nascimento1999algorithm} for details).

\textit{Remark 2} The anchor-oriented Voronoi partitioning can be made robust for noisy sensor measurements, for instance, by combining with the Guaranteed Voronoi approaches \cite{chevet2019guaranteed}.

\subsection{Regret Analysis}
Let's define a coverage regret metric as the discrepancy between the current locational cost and the optimum achievable cost for a given set of robots.
\begin{equation}
    r(t) = H_{\mathcal{V}}(\mathbf{p}_1(t), ..., \mathbf{p}_n(t)) - H_{\mathcal{V}}(\mathbf{p}_1^*, ..., \mathbf{p}_n^*) 
    \label{eqn: regret}
\end{equation}
The cumulative regret is $R(t) = \sum_i^t r(t)$.

\begin{theorem}
Let $T$ be the time horizon, and assume each robot in a multi-robot system estimates the position of a common anchor with uncertainty modeled as $N(^i \mu_{AP}, ^i \sigma_{AP}^2)$ for each robot $i$. For a given number of robots and the environment size defined by $b^*$, the cumulative regret $R(T)$ after $T$ steps, due to suboptimal coverage decisions influenced by the anchor position estimation, satisfies:
\begin{equation}
    R(T) \leq \mathcal{O}\left(\sqrt{T(1 + \sigma_{AP}^2)}\right),
\end{equation}
where $\sigma_{AP}^2$ denotes the variance in the Gaussian noise affecting the anchor position estimation.
\label{theorem:regret}
\end{theorem}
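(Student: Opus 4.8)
The plan is to recast the coverage dynamics as a stochastic gradient descent on the locational cost $H_{\mathcal{V}}$, treating the anchor-position uncertainty as gradient noise, and then to bound the accumulated suboptimality with the standard $\mathcal{O}(\sqrt{T})$ regret argument whose leading constant is inflated by the noise variance. The target form $\sqrt{T(1+\sigma_{AP}^2)}$ is precisely what one obtains when the effective squared gradient bound is $1+\sigma_{AP}^2$: the ``$1$'' is the intrinsic descent term and the ``$\sigma_{AP}^2$'' is the contribution of the noisy anchor estimate.

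First I would make the gradient interpretation precise. Using the classical identity $\partial H_{\mathcal{V}}/\partial \mathbf{p}_i = M_{{}^i\mathcal{V}_i}(\mathbf{p}_i - C_{{}^i\mathcal{V}_i})$ from \cite{cortes2004coverage}, the controller of Eq.~\eqref{eq: positionController_relativeLocalization} is a (scaled) negative-gradient step on $H_{\mathcal{V}}$. Because the workspace is bounded by the consensus radius $r^*$ (so the diameter $D\le 2r^*$ is constant) and $H_{\mathcal{V}}$ is quadratic in the robot positions, both $\nabla H_{\mathcal{V}}$ and its Hessian are uniformly bounded on ${}^i\mathcal{Q}$; I normalize the gradient bound to $1$ and absorb constants into $\mathcal{O}(\cdot)$. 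By Theorem~\ref{theorem:partioning}, a perfectly known anchor makes ${}^i\mathcal{V}_i$ coincide with the global cell, so the sole error source is the random anchor offset drawn from $N({}^i\mu_{AP},{}^i\sigma_{AP}^2)$. Since the estimated centroid in Eq.~\eqref{eq: positionController_relativeLocalization_centroid} depends Lipschitz-continuously on $\mathbf{p}_{AP}$ over the bounded region (cell boundaries and the weighted integral move continuously with the anchor), the realized control equals the true gradient plus a perturbation $\mathbf{e}_t$ with $\mathbb{E}\|\mathbf{e}_t\|^2\le c\,\sigma_{AP}^2$ for a constant $c$ depending only on the Lipschitz constant and $r^*$.

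With the effective per-step gradient satisfying $\mathbb{E}\|\mathbf{g}_t\|^2\le 1+c\,\sigma_{AP}^2$, I would run the standard descent-lemma telescoping on $\|\mathbf{p}(t)-\mathbf{p}^*\|^2$ with a horizon-tuned constant step size $\eta$, obtaining
\[
R(T)=\sum_{t=1}^{T} r(t) \le \frac{D^2}{2\eta} + \frac{\eta\,T}{2}\bigl(1+c\,\sigma_{AP}^2\bigr).
\]
Choosing $\eta = D/\sqrt{(1+c\,\sigma_{AP}^2)\,T}$ balances the two terms and yields $R(T)\le D\sqrt{(1+c\,\sigma_{AP}^2)\,T}=\mathcal{O}\!\left(\sqrt{T(1+\sigma_{AP}^2)}\right)$, as claimed.

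The hardest, least routine part is twofold. The descent-lemma step needs convexity of $H_{\mathcal{V}}$ along the trajectory, yet $H_{\mathcal{V}}$ is non-convex in general (many local CVT configurations); the rigorous version must therefore restrict to a neighborhood of a local CVT optimum where $H_{\mathcal{V}}$ obeys a local convexity or Polyak--{\L}ojasiewicz gradient-dominance condition --- availability of such a neighborhood is what the bounded domain and the consensus on $r^*$ (Theorem~\ref{theorem:partioning}) secure. The second delicate point is the variance estimate $\mathbb{E}\|\mathbf{e}_t\|^2\le c\,\sigma_{AP}^2$: establishing that the centroid is Lipschitz in the anchor position requires controlling how the Voronoi boundaries shift as $\mathbf{p}_{AP}$ varies and ruling out degenerate (vanishing-mass) cells, which again leans on the boundedness of ${}^i\mathcal{Q}$.
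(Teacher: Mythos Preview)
Your proposal is sound and takes a genuinely different route from the paper. The paper's own argument writes the expected single-step regret as an integral over the noise density, sums over $t$, and then invokes the Central Limit Theorem to claim the cumulative effect scales as $\sqrt{T}$ with a $(1+\sigma_{AP}^2)$ prefactor. You instead interpret the controller as stochastic gradient descent on $H_{\mathcal{V}}$, bound the second moment of the stochastic gradient by $1+c\,\sigma_{AP}^2$ via Lipschitz dependence of the centroid on the anchor, and run the standard descent-lemma telescoping with a horizon-tuned step size.

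What each approach buys: the paper's derivation is shorter but the CLT step is heuristic---the CLT yields a distributional statement about a normalized sum, not an upper bound on $R(T)$, and absent an explicit mechanism forcing $r(t)\to 0$ one would expect linear growth. Your SGD framing supplies precisely that mechanism and makes the $\sigma_{AP}^2$ dependence transparent through the gradient-variance bound. You also correctly isolate the two real technical obligations (local convexity or a PL condition near the CVT configuration, and Lipschitz continuity of $C_{{}^i\mathcal{V}_i}$ in $\mathbf{p}_{AP}$), neither of which the paper confronts. The cost of your route is that the optimal $\eta$ depends on $T$ and $\sigma_{AP}^2$, so the fixed gain $k$ in Eq.~\eqref{eq: positionController_relativeLocalization} must be tuned to the horizon; a decreasing schedule $\eta_t\propto 1/\sqrt{t}$ would remove that dependence at the same rate.
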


\begin{proof}
To establish the upper bound on cumulative regret due to noise in anchor position estimation, we begin by expressing the impact of Gaussian noise on the coverage cost function. Let $\mathbf{p}_i(t)$ represent the position of robot $i$ at time $t$, and $\mathbf{p}_i^*$ denote the optimal position of robot $i$. Given the Gaussian noise in anchor position estimation, the observed position $\tilde{\mathbf{p}}_i(t)$ deviates from the true position $\mathbf{p}_i(t)$ as:
\begin{equation}
    \tilde{\mathbf{p}}_i(t) = \mathbf{p}_i(t) + \epsilon_i(t), \quad \epsilon_i(t) \sim \mathcal{N}(0, \sigma_{AP}^2),
\end{equation}
where $\epsilon_i(t)$ is the noise term for robot $i$ at time $t$.

The effect of this noise on the locational cost function $H_{\mathcal{V}}$ can be analyzed through its impact on the decision-making process of the robots. Specifically, the discrepancy in the observed positions leads to suboptimal coverage decisions, thereby increasing the coverage regret.

Considering the definition of cumulative regret, the expectation of regret at each time step due to the noise can be modeled as a function of the variance $\sigma_{AP}^2$:
\begin{equation}
    \mathbb{E}[r(t)] = \mathbb{E}\left[H_{\mathcal{V}}(\tilde{\mathbf{p}}_1(t), ..., \tilde{\mathbf{p}}_n(t)) - H_{\mathcal{V}}(\mathbf{p}_1^*, ..., \mathbf{p}_n^*)\right],
\end{equation}
which is directly influenced by the distribution of $\epsilon_i(t) \forall i$.
To further elucidate the relationship between $\mathbb{E}[r(t)]$ and the distribution of $\epsilon_i(t)$, we expand the expectation as follows:
\begin{multline}
    \mathbb{E}[r(t)] = \int_{-\infty}^{\infty} \Bigr[H_{\mathcal{V}}(\mathbf{p}_1(t) + \epsilon_1(t), ..., \mathbf{p}_n(t) + \epsilon_n(t)) - \\
    H_{\mathcal{V}}(\mathbf{p}_1^*, ..., \mathbf{p}_n^*)\Bigr] \prod_{i=1}^{n}f(\epsilon_i(t))d\epsilon_i(t),
\end{multline}
where $f(\epsilon_i(t))$ is the probability density function of the Gaussian distribution for the noise affecting robot $i$'s position estimate at time $t$. 

The impact of $\epsilon_i(t)$ on the locational cost function can be seen as introducing variability to the actual positions $\mathbf{p}_i(t)$, leading to suboptimal decisions. This variability can be quantified in terms of the variance $\sigma_{AP}^2$ of the noise distribution.
To transition from the expected single-step regret to the cumulative regret over all time steps $T$, we sum the expectations of $r(t)$ across all $t$ from $1$ to $T$, acknowledging the additive nature of regret over time:
\begin{equation}
    R(T) = \sum_{t=1}^{T} \mathbb{E}[r(t)].
\end{equation}
Considering the compounding effect of Gaussian noise on the decision-making process across all robots and time steps, we leverage the Central Limit Theorem (CLT) for summing independent random variables (the noise terms $\epsilon_i(t)$) to argue that the sum of variances over $T$ steps contributes to the scaling of the cumulative regret. Specifically, the cumulative effect of noise on regret is proportional to the root of $T$ times the average effect of noise at a single step, leading to:
\begin{equation}
    R(T) \leq \mathcal{O}\left(\sqrt{T(1 + \sigma_{AP}^2)}\right),
\end{equation}
where the root of $T$ factor arises from the CLT and the $(1 + \sigma_{AP}^2)$ term represents the compounded impact of noise variance on the cumulative regret, reflecting both the direct effect of noise and its variance on suboptimal decisions over time.
This bound signifies that as the uncertainty $\sigma_{AP}^2$ in anchor position estimation increases, the achievable cumulative regret grows, reflecting the inherent challenges in optimizing coverage under uncertain conditions.
\end{proof}

\textbf{Algorithmic Complexity:} The proposed approach is distributed, and each robot performs similar operations and, therefore, is highly scalable for the number of robots in the system. Per Alg.~\ref{algo:aoc}, each robot shares information with other robots through a stable communication channel with the same payload containing the current local position, AP position prediction, and locally determined bounded region; hence, the communication cost will be $C(\tau n)$, for $\tau$ number of rounds for consensus in $n$-number of robots. The approach begins with localized Voronoi partitioning with a computational complexity $O(n^2)$. Each robot then determines the boundary based on the consensus approach discussed in Sec.~\ref{sec:anchor_boundary_consensus}, which requires $O(\tau n)$, where $\tau \leq \mathbb{D}(\mathcal{G})$. Finally, robots perform coverage using AOC controller discussed in Sec.~\ref{sec:covergae_controller} in $O(wknd)$, for $w$ iterations, $k$ centers, $n$-number of robots and $d$ dimensions. Here, we consider $k=n$ for the same number of centers and robots and 2-dimensional space; hence, the required time for coverage will be $O(2wn^2)$. Each of these operations performs sequentially, so the overall time complexity for coverage will be $O(n^2+\tau n + 2wn^2) \rightarrow O(wn^2)$ as $\tau \leq n$. $w$ depends on the size of the environment and the robots' initial positions. For our analysis, we set it to a constant so the simplified time complexity will be $O(n^2)$.


\def\figwidth{0.37}
\begin{figure*}[ht]
    \centering
    \small
    \renewcommand{\arraystretch}{0.2}
    \begin{tabular}{ccccc} 
    \hspace{-0.12in}
        \textbf{\underline{Initial}} & \hspace{-0.1in}\textbf{\underline{Timestep 20}} & 
        \hspace{-0.1in} \textbf{\underline{Timestep 40}} & \hspace{-0.1in}\textbf{\underline{Timestep 60}} & 
        \hspace{-0.1in} \textbf{\underline{Timestep 90}} \\
        \hspace{-0.12   in}
        \includegraphics[width=\figwidth\columnwidth]{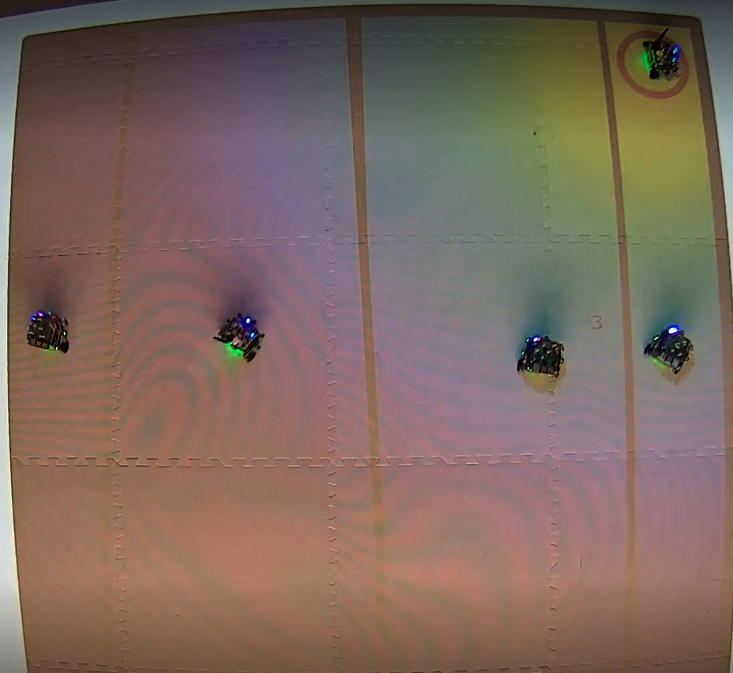} &
        \hspace{-0.05in}
        \includegraphics[width=\figwidth\columnwidth]{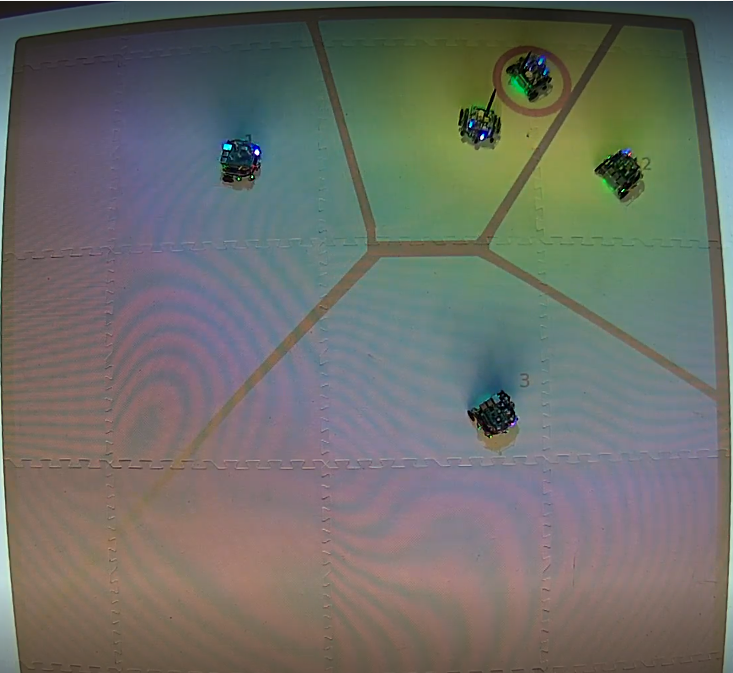}&
        \hspace{-0.05in}
        \includegraphics[width=\figwidth\columnwidth]{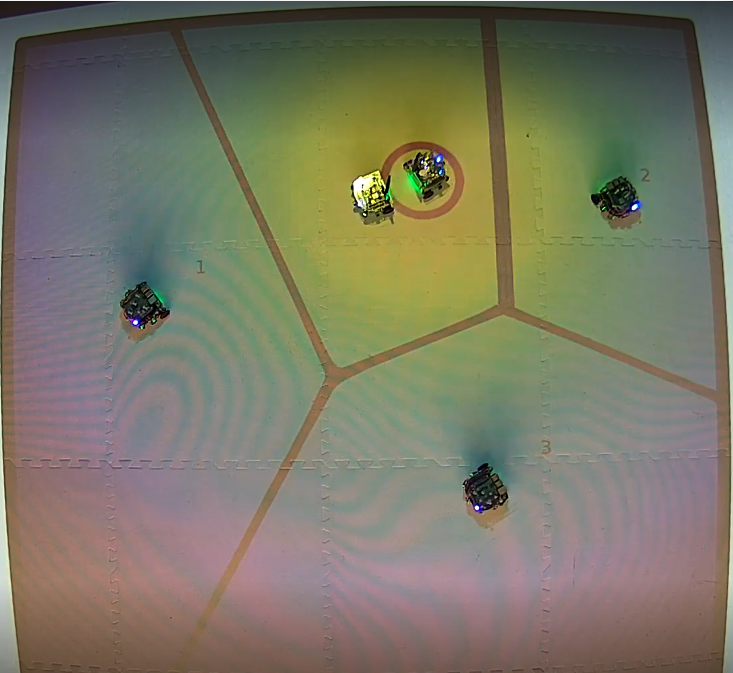} &
         \hspace{-0.05in}
        \includegraphics[width=\figwidth\columnwidth]{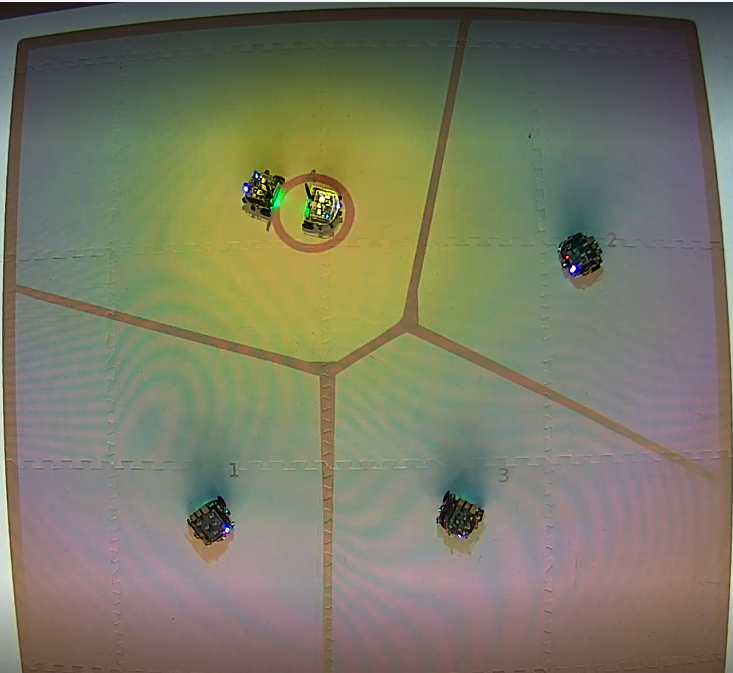}&
        \hspace{-0.05in}
        \includegraphics[width=\figwidth\columnwidth]{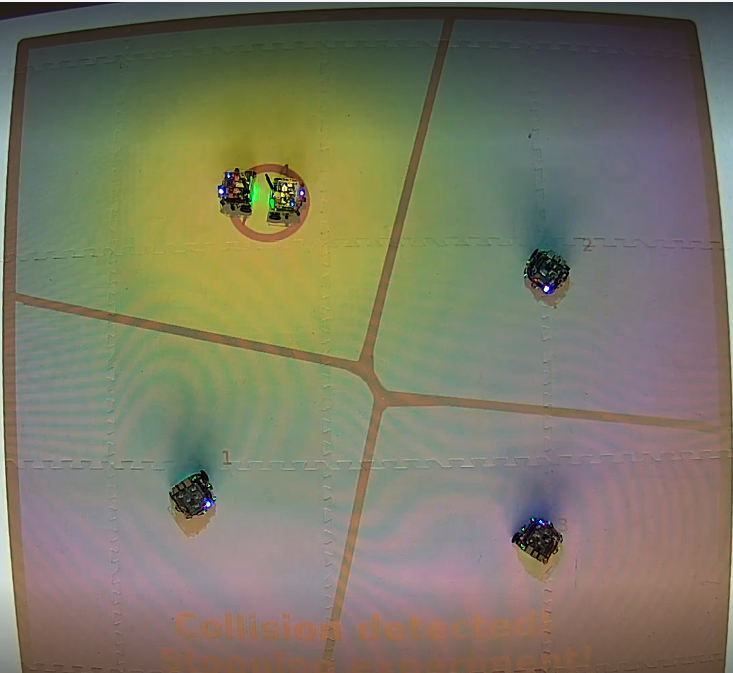}

    \end{tabular}
    \caption{Dynamic environmental processes use-case demonstration in Robotarium using the proposed AOC approach. The plots depict the density source (red-circled robot) movement from left to right. The source robot creates a density distribution around it, which four coverage-tasked robots would measure and use as $\phi(q)$. Note - the anchor position (AP) is at the center (static) of the environment, so the localized workspace bounds do not change.}
    \label{fig:N4_targetTracking}
\end{figure*}

\def\figwidth{0.37}
\begin{figure*}[ht]
    \centering
    \small
    \renewcommand{\arraystretch}{0.2}
    \begin{tabular}{ccccc} 
    \hspace{-0.12in}
        \textbf{\underline{Initial}} & \hspace{-0.1in}\textbf{\underline{Timestep 30}} & 
        \hspace{-0.1in} \textbf{\underline{Timestep 60}} & \hspace{-0.1in}\textbf{\underline{Timestep 90}} & 
        \hspace{-0.1in} \textbf{\underline{Timestep 120}} \\
        \hspace{-0.12in}
        \includegraphics[width=\figwidth\columnwidth]{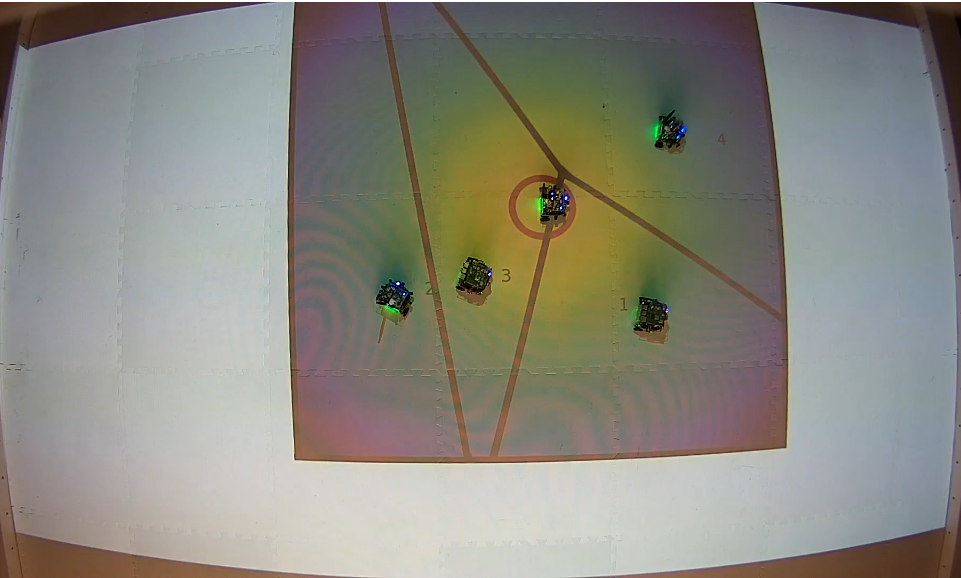} &
        \hspace{-0.05in}
        \includegraphics[width=\figwidth\columnwidth]{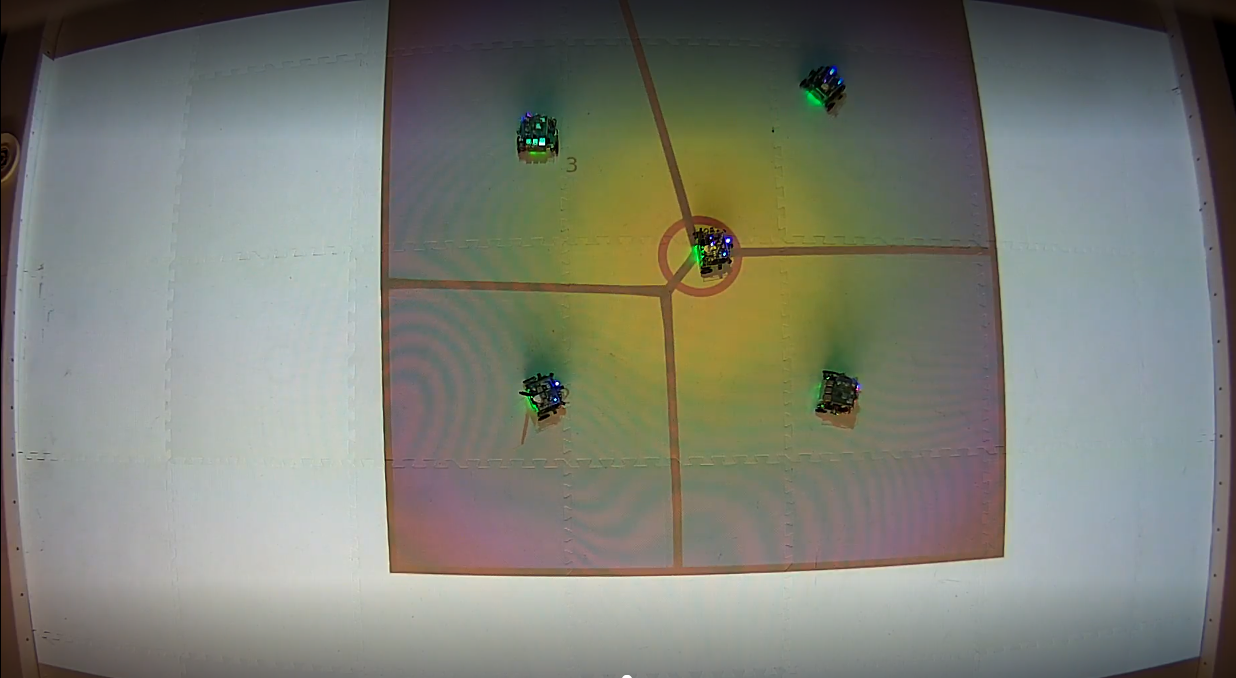}&
        \hspace{-0.05in}
        \includegraphics[width=\figwidth\columnwidth]{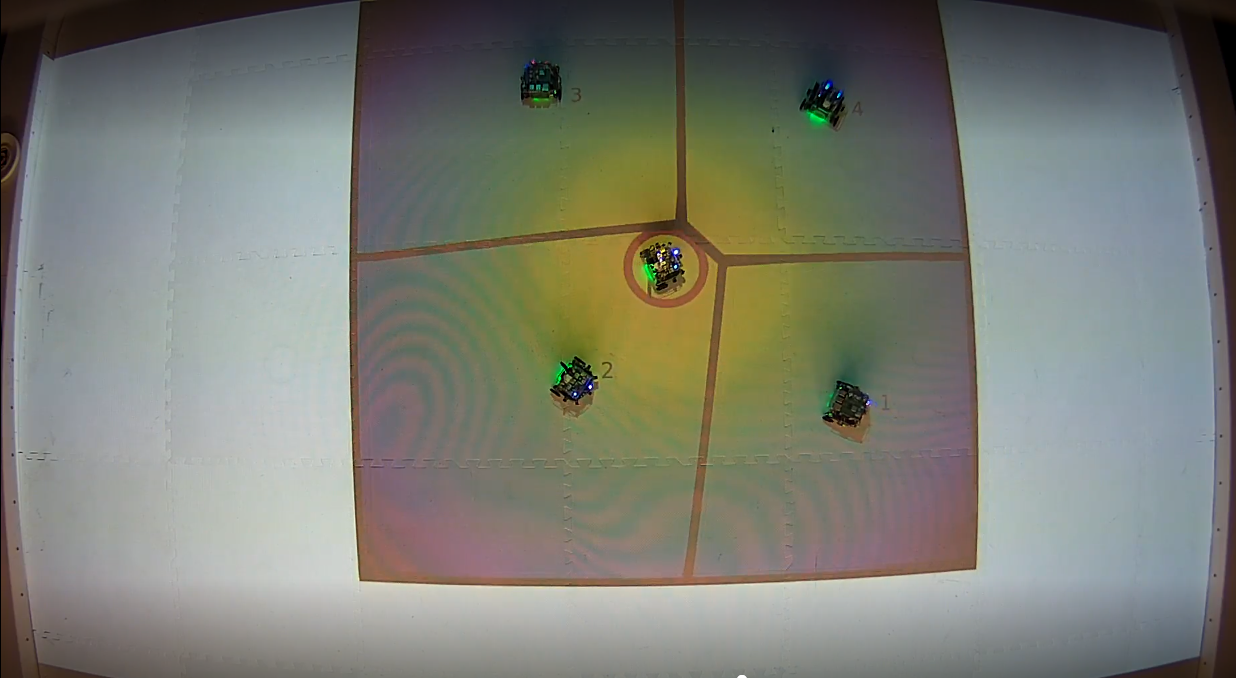} &
         \hspace{-0.05in}
        \includegraphics[width=\figwidth\columnwidth]{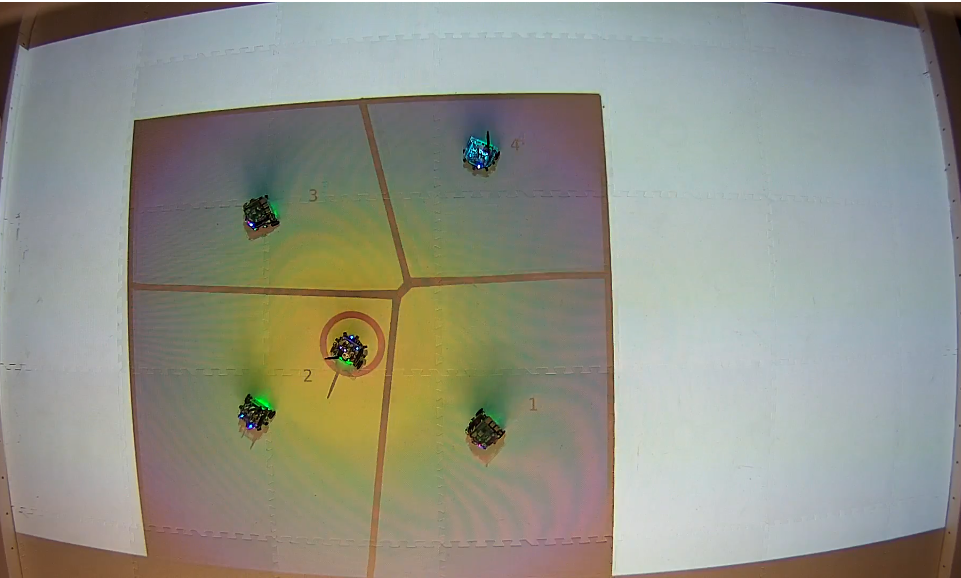}&
        \hspace{-0.05in}
        \includegraphics[width=\figwidth\columnwidth]{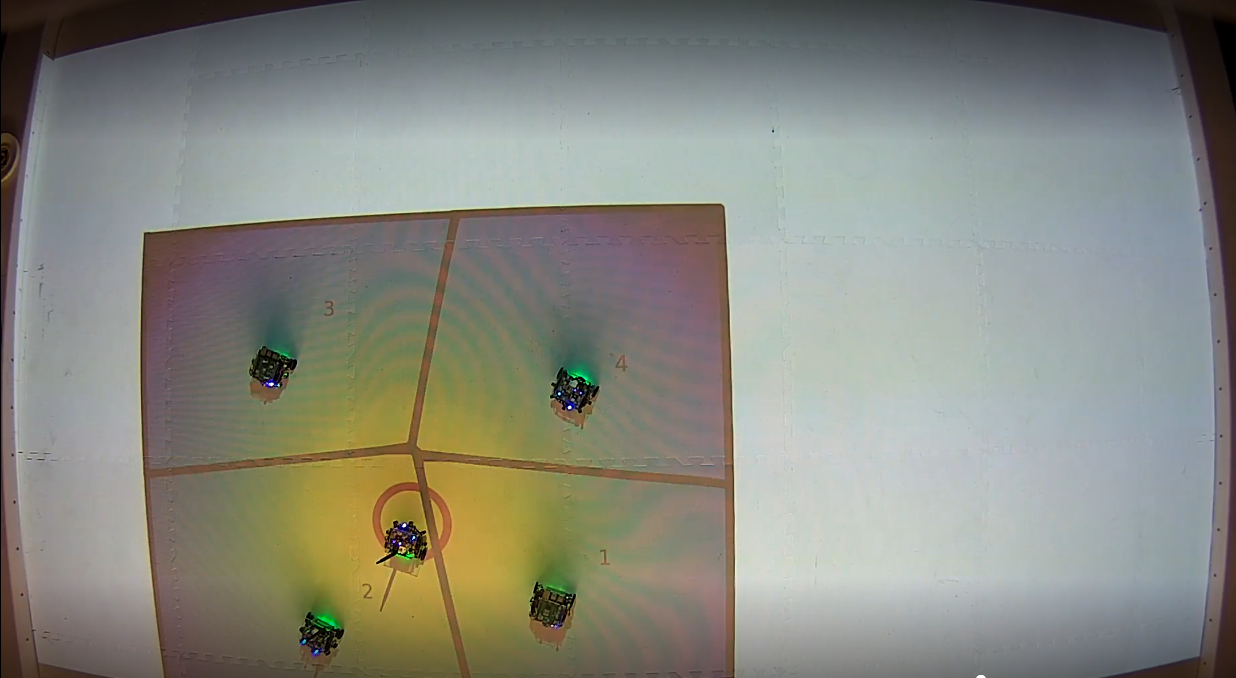}

    \end{tabular}
    \caption{Moving anchor (target tracking) demonstration in Robotarium using the proposed AOC approach. The anchor (red-circled robot) moves from the top-right to the bottom-left location. over time. Since the Localized Voronoi partitioning is centered around the anchor, all four coverage-tasked robots move (adapt) their coverage workspace along with the anchor and keep covering the workspace optimally.}
    \label{fig:N4_dynamicAP}
\end{figure*}

\section{Experiments and Results}
\label{sec:experiments}
We performed experiments using Robotarium \cite{wilson2021robotarium} python simulator, utilizing two separate environment sizes: a small-scale $4\times 4 m^2$ with 4 robots and a large-scale $10\times 10 m^2$ with 6 robots. Our evaluation involved testing the proposed \textbf{anchor-oriented coverage (AOC)} approach under various scenarios and application use cases, as described below. 
Samples of the simulation trials and real-world demonstrations are presented in the \textbf{attached video}.


To validate our approach, we implemented a baseline centroidal Voronoi Tessellation (\textbf{CVT}) \cite{kemna2017multi}, which performs coverage controller (dynamic global Voronoi partitioning) using global localization (GPS-based).
We also implement a variant of our approach without performing consensus (\textbf{AOC no-consensus})
to show the impact of the consensus controller in Sec.~\ref{sec:anchor_boundary_consensus} in the coverage performance. 
For AOC, we simulate a Gaussian noise ${}^i p_{AP} =\mathbb{N}(p_C,0.1m)$ for the anchor position estimation, with the AP located at the center $p_C=(0,0)$ of the simulation workspace.
Furthermore, we introduced the initial disagreement of the anchor radius ${}^i r$ on the robots (simulating inconsistencies in workspace boundaries) using a Gaussian noise ${}^i r = \mathbb{N}(r^*,\sigma^2_r)$. Because of this noise, in AOC (no-consensus), the robots would have different boundary sizes around the anchor and perform Localized Voronoi partitioning with ${}^i r$, instead of $r^*$.
Parameters are set as $\sigma^2_r=1, r^*=1m$ for the small environment and $\sigma^2_r=2, r^*=2m$ for the large environment setting. CVT will use $r^*$ for the global workspace. 

Results from a sample trial of these experiments are presented in Fig. \ref{fig:N4_0.1std} for the small-scale environment and Fig.~\ref{fig:N6_0.1std} for the large-scale environment. The instantaneous regret function $r(t)$ in Eq.~\eqref{eqn: regret} calculated in a global frame is used as a performance indicator and averaged over 10 trials in each experiment. 
Notably, the results of the final positions of the robot and the locational cost reveal that the AOC exhibited performance comparable to that of GPS-based coverage, supporting Theorem~\ref{theorem:partioning}.
However, without consensus, the robots' anchor range differences would not optimally cover and minimize the cost, as expected. 
This explicates the importance of achieving consensus on the localized workspace to work with no prior agreements on the workspace boundary.
In Fig.~\ref{fig:N4_orientation}, we show the results when the robots start with different orientations. We set the orientation consensus $\theta^*=0$, so it is easier to visualize the resulting partitions from each robot's local FoRs without any skew.

\begin{figure}[t]
    \small
    \centering
    \begin{tabular}{cc}
        \textbf{\underline{Cost }} & \textbf{\underline{Cumulative regret $R$}} \\
        \vspace{-2.5mm}
        \hspace{-0.1in}\includegraphics[width=0.49\linewidth]{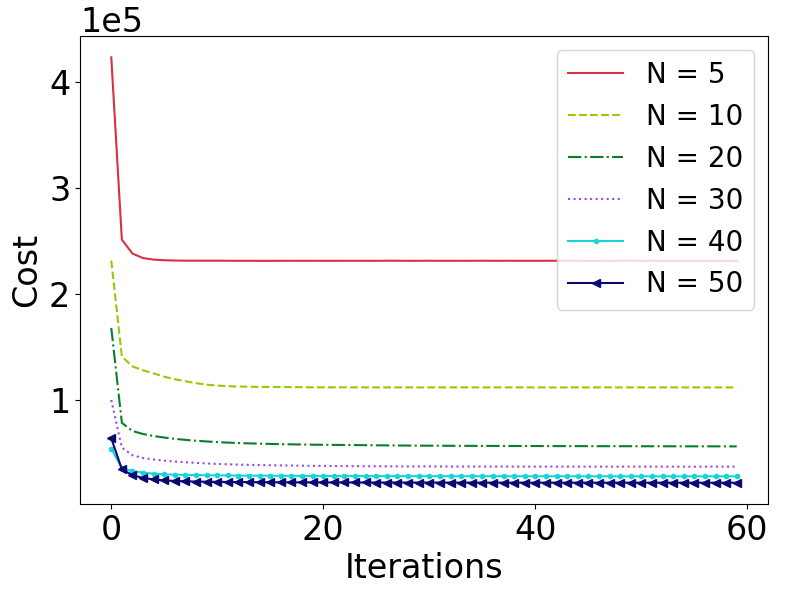} &
        \hspace{-0.1in}\includegraphics[width=0.49\linewidth]{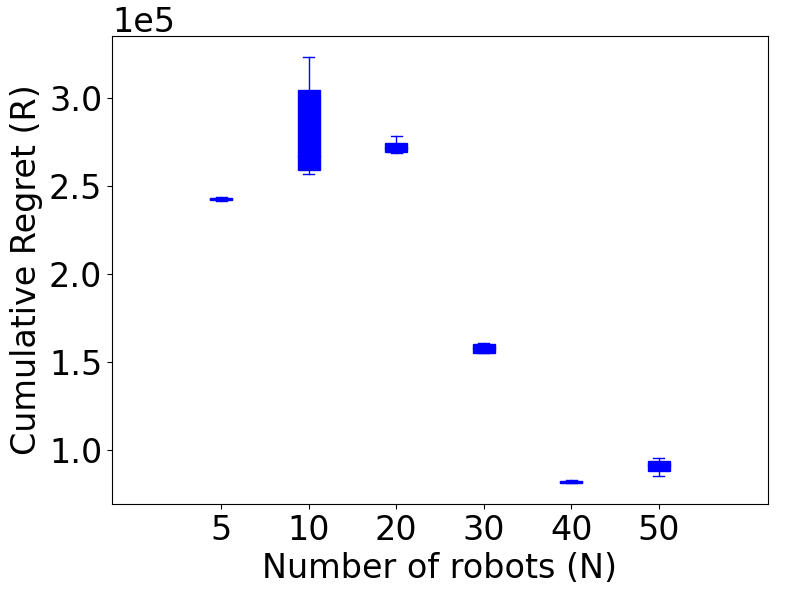} \\ 
    \end{tabular}
    \caption{Time-evolution of the cost and cumulative regret $R$ for different number of robots $N$ at the end of the experiment in a $50 \times 50m^2$ workspace.}
    \label{fig:scalability_exp}
    \vspace{-4mm}
\end{figure}

\begin{figure}[t]
    \small
    \centering
    \begin{tabular}{cc}
        \textbf{\underline{Cost }} & \textbf{\underline{Cumulative regret $R$}} \\
        \vspace{-2.5mm}
        \hspace{-0.1in}\includegraphics[width=0.49\linewidth]{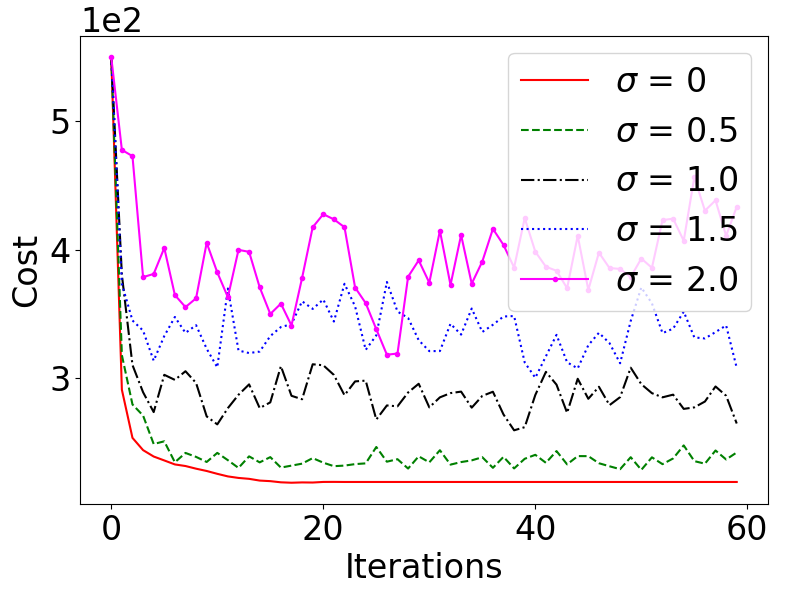} &
        \hspace{-0.1in}\includegraphics[width=0.49\linewidth]{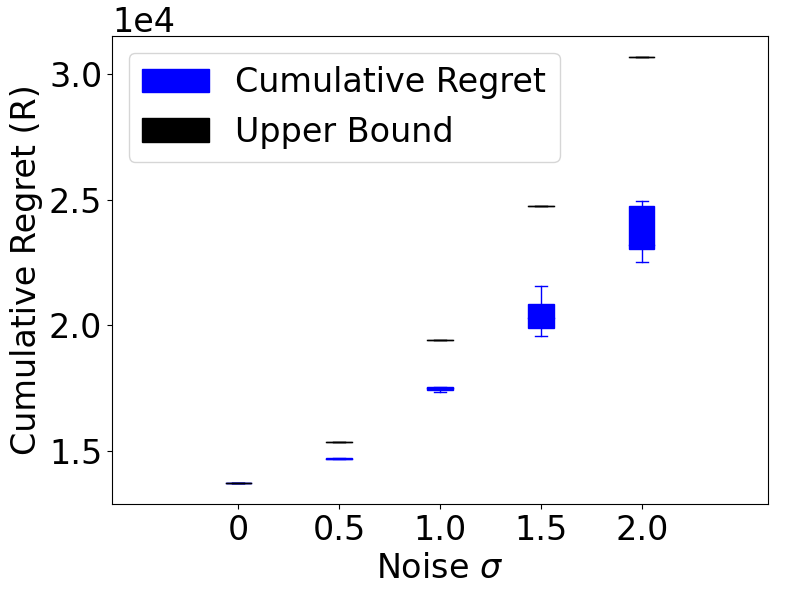} \\ 
    \end{tabular}
    \caption{Time-evolution of the cost and the cumulative regret for different noise levels with $N=10$ and $T=60$ in a $10 \times 10m^2$ workspace.}
    \label{fig:noise_exp}
    \vspace{-4mm}
\end{figure}

\subsection{Scalability Analysis}
We investigate the scalability aspect of AOC by gradually scaling up the number of robots (ranging from 5 to 50) in a $50\times 50 m^2$ (Fig. \ref{fig:scalability_exp}). We can observe that increasing the number of robots correlates with reducing both cost values and cumulative regret. This indicates that the anchor-oriented coverage is flexible enough to adapt to different numbers of robots, showcasing its scalability in a large environment. In the case of sparse graph connectivity, methods such as \cite{siligardi2019}
can be utilized to ensure connectivity maintenance. The distributed nature of the proposed algorithm ensures its scalability and functionality, contingent upon each robot maintaining connectivity with its Delaunay neighbors (the neighbors sharing an edge of their Voronoi partition). 

\subsection{Impact of the AP Prediction Uncertainty}
To assess the impact of noise level on the proposed approach, we performed experiments with 10 robots in a $10\times 10 m^2$ with noise level ($\sigma^2_{AP}$) in the AP position prediction ranging from 0\% \text{ to } 20\% of the environment size. The cumulative regret data for this experiment is presented in Fig. \ref{fig:noise_exp}. A clear trend can be observed as the cost value increases with the rise in the noise level of the AP prediction, staying within the upper bounds as expected in Theorem~\ref{theorem:regret}.

\section{Real-World Demonstration in Robotarium}
To demonstrate the applicability of our approach in real-world applications, we have conducted experiments on real robots using the Robotarium testbed \cite{wilson2021robotarium}.

\textbf{Application to Dynamic Environmental Processes}
AOC can be applied to a density-tracking scenario, where an environmental feature changes over time.  In this case, we have considered that the source of a physical process $\phi(q)$ (e.g., covering dynamic wildfires) has a probability distribution that can be measured across the environment. The goal for the pursuing robots here is to adjust their positions based on the source's current position. For the experiment setup, we have considered a $2m \times 2m$ workspace on the robotarium testbed. Initially, the dynamic feature (red-circled robot) is in the top-left corner of the workspace and is moving with a fixed velocity.
The results for this case are presented in Fig. \ref{fig:N4_targetTracking}. It can be seen from the movement graphs that robots were able to track and follow the distribution accurately.

\textbf{Dynamic Anchor}
We simulated the anchor's movement and analyzed its influence on the AOC's performance. This is analogous to a use case where the anchor is an intruder in the given environment, and the coverage-tasked robots should keep the coverage around the intruder (e.g., cooperative pursuit \cite{yang2022game} or target tracking in surveillance missions).
%
The robots aim to cover the specified boundary around the AP. Here, we have considered a square environment of width $1.5m$. The result for this scenario is shown in Fig. \ref{fig:N4_dynamicAP}. It is evident from the results that the robots could adapt dynamically to these environmental changes, highlighting the versatility of our approach with dynamic anchors (see Remark 1).

\section{Conclusion}
We proposed a novel anchor-oriented multi-robot coverage method for creating dynamic Voronoi partitions in GPS-denied environments. Our method uses a distributed consensus mechanism that allows robots to agree on a local coverage workspace centered around the anchor. We theoretically analyzed the correctness of the approach and extensively validated the effectiveness of the proposed AOC approach in efficiently covering the environment and have shown comparable performance to GPS-based coverage. The results provided strong evidence for the practicality of the proposed AOC in various scenarios, including target tracking and monitoring dynamic environmental features, provided these features are observable by all the robots.
\bibliographystyle{IEEEtran}
\bibliography{ref}

\end{document}